\documentclass{article}

\usepackage{microtype}
\usepackage{graphicx}
\usepackage{subcaption}
\usepackage{booktabs}
\usepackage{hyperref}

\usepackage{amsmath}
\usepackage{amssymb}
\usepackage{mathtools}
\usepackage{amsthm}
\usepackage{multirow}
\usepackage[capitalize,noabbrev]{cleveref}
\usepackage{pifont}
\usepackage[table]{xcolor}
\usepackage{arydshln}
\usepackage{xspace}
\usepackage{tikz}
\usepackage{makecell}

\DeclareRobustCommand\circled[1]{\tikz[baseline=(char.base)]{\node[shape=circle,draw=black,minimum size=0.35cm,inner sep=0pt,fill=lightblue] (char) {\fontfamily{phv}\selectfont \scriptsize \textbf{#1}};}}\definecolor{lightblue}{HTML}{DAE8FC}

\newcommand{\cmark}{\textcolor{green!60!black}{\ding{51}}}
\newcommand{\xmark}{\textcolor{red!65!black}{\ding{55}}}
\newcommand{\hlblue}[1]{\colorbox{blue!6}{#1}}

\newcommand{\method}{\emph{Rank-Learner}\xspace}

\usepackage[accepted]{icml2026}

\theoremstyle{plain}
\newtheorem{theorem}{Theorem}[section]

\theoremstyle{definition}

\newtheorem{assumption}[theorem]{Assumption}
\theoremstyle{remark}

\usepackage[textsize=tiny]{todonotes}

\icmltitlerunning{Orthogonal Ranking of Treatment Effects}

\begin{document}

\twocolumn[
  \icmltitle{\method: Orthogonal Ranking of Treatment Effects}

  \icmlsetsymbol{equal}{*}

  \begin{icmlauthorlist}
    \icmlauthor{Henri Arno}{ugent}
    \icmlauthor{Dennis Frauen}{lmu,mcmcl}
    \icmlauthor{Emil Javurek}{lmu,mcmcl}
    \icmlauthor{Thomas Demeester}{ugent}
    \icmlauthor{Stefan Feuerriegel}{lmu,mcmcl}
  \end{icmlauthorlist}

  \icmlaffiliation{ugent}{Ghent University - imec, Ghent}
  \icmlaffiliation{lmu}{LMU Munich, Munich}
  \icmlaffiliation{mcmcl}{Munich Center for Machine Learning (MCML)}

  \icmlcorrespondingauthor{Henri Arno}{henri.arno@ugent.be}
  \icmlkeywords{Causal Inference, Orthogonal Learning}

  \vskip 0.3in
  ]
\printAffiliationsAndNotice{}

\begin{abstract}
Many decision-making problems require ranking individuals by their treatment effects rather than estimating the exact effect magnitudes. Examples include prioritizing patients for preventive care interventions, or ranking customers by the expected incremental impact of an advertisement. Surprisingly, while causal effect estimation has received substantial attention in the literature, the problem of directly learning \emph{rankings of treatment effects} has largely remained unexplored. In this paper, we introduce \method, a novel two-stage learner that directly learns the ranking of treatment effects from observational data. We first show that na{\"i}ve approaches based on precise treatment effect estimation solve a harder problem than necessary for ranking, while our \method optimizes a pairwise learning objective that recovers the true treatment effect ordering, without explicit CATE estimation. We further show that our \method is Neyman-orthogonal and thus comes with strong theoretical guarantees, including robustness to estimation errors in the nuisance functions. In addition, our \method is model-agnostic, and can be instantiated with arbitrary machine learning models (e.g., neural networks). We demonstrate the effectiveness of our method through extensive experiments where \method consistently outperforms standard CATE estimators and non-orthogonal ranking methods. Overall, we provide practitioners with a new, orthogonal two-stage learner for ranking individuals by their treatment effects.
\end{abstract}

\newpage
\section{Introduction}
\label{sec:intro}

Across many application domains, decision-makers must rank individuals by their treatment effects \cite{kamran:2024:learning}. Such ranking problems arise especially when limited resources make it necessary to prioritize those who benefit most from the treatment.

\textbf{Examples.} \emph{In healthcare, clinicians need to triage patients according to who should receive intensive care during periods when demand exceeds capacity \cite{aquino:2022:ethical, vinay:2021:ethics} or prioritize patients for preventive care \cite{kraus:2024:datadriven}. In marketing, firms must decide which customers to target with retention offers or which prospects to reach with costly advertisements \cite{devriendt:2021:why, gharibshah:2021:user}. In public policy, governments must decide which individuals to target with policy interventions \cite{card:2018:what}. In all of these settings, effective decision-making depends on the relative ordering of treatment effects rather than on their exact magnitudes.}

\begin{figure*}[t]
  \centering
  \makebox[\linewidth][c]{%
    \includegraphics[width=1\linewidth]{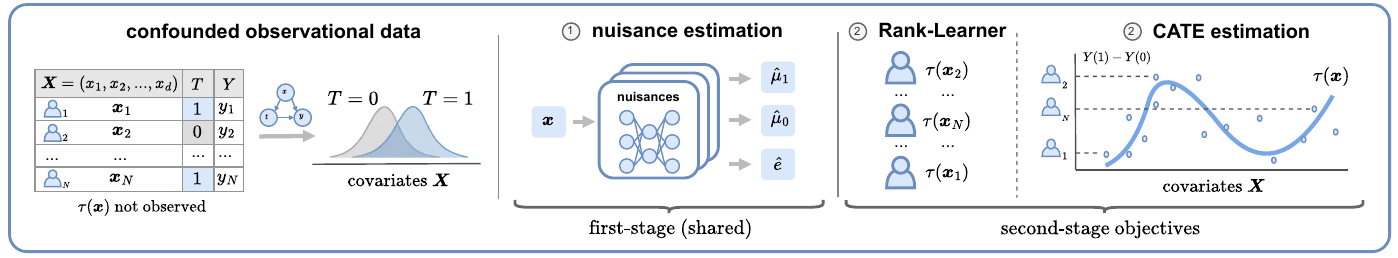}%
    }
  \caption{
    \textbf{Two-stage learners for ranking treatment effects.}
    \emph{Left:} Confounded observational data with unobserved treatment effects. 
    \emph{Center:} First-stage estimation of nuisance functions (i.e., response surfaces and propensity score).
    \emph{Right:} Second-stage objectives: our \method vs. standard CATE estimation. Here, our proposed \method directly optimizes a pairwise, Neyman-orthogonal ranking objective that targets the relative ordering of treatment effects.
    In contrast, standard CATE estimation optimizes a pointwise regression objective to recover treatment effect magnitudes, which is harder than necessary for ranking.
    }
  \label{fig:overview}
\end{figure*}

Interestingly, the task of learning-to-rank individuals by their treatment effects from observational data has received relatively little attention (see Section~\ref{sec:related_work}), especially when compared to the extensive literature on estimating heterogeneous treatment effects. One reason is that standard learning-to-rank methods \cite{liu:2011:learning, cao:2007:learning, burges:2005:learning} are \underline{\textbf{not}} directly applicable in this setting; such methods would require supervision via the relative ordering of treatment effects, but treatment effects are never directly observed in observational data \cite{rubin:2005:causal}.

Instead, existing works aimed at ranking individuals by their treatment effects proceed \emph{indirectly} by first estimating conditional average treatment effects (CATEs) from observational data \cite{kunzel:2019:metalearners, wager:2018:estimation}. Concretely, a na{\"i}ve approach is to first estimate CATEs using state-of-the-art methods (e.g., two-stage learners such as the DR- and R-learners \cite{kennedy:2023:optimal, nie:2021:quasioracle}) and then rank individuals according to their estimated effects. However, as we show later, this strategy solves a harder learning problem than necessary for ranking. In contrast, only a few works have explored how to learn treatment effect rankings \emph{directly} \cite{kamran:2024:learning, vanderschueren:2024:metalearners}, but \textbf{\underline{none}} of these methods are Neyman-orthogonal and thus lack favorable theoretical properties (e.g., robustness to nuisance estimation error).

In this paper, we introduce \textbf{\method}, a novel two-stage learner for \emph{ranking individuals by their treatment effects from observational data}. Rather than first estimating CATEs and then ranking individuals by their magnitudes, our \method directly targets the ranking task itself by optimizing a population-level pairwise ranking objective, without explicit CATE estimation. This avoids solving the unnecessarily difficult problem of recovering precise treatment effects and instead focuses directly on the relevant quantity in our task, namely, the relative ordering of treatment effects between individuals (see Figure~\ref{fig:overview}).  Our \method follows a two-stage approach: in the first stage, nuisance functions are estimated using flexible machine learning models; in the second stage, these estimates are plugged into a \emph{novel orthogonal pairwise loss} to learn the ranking function. As a result, \method is fully model-agnostic and can thus be instantiated with arbitrary machine learning models, such as neural networks. The orthogonality of our loss allows for favorable theoretical properties and ensures robustness to estimation errors in the nuisance functions. To the best of our knowledge, our \method is the first orthogonal two-stage learner for ranking treatment effects. 

Our main \textbf{contributions} are:\footnote{Code is available at \url{https://github.com/henriarnoUG/rank-learner}.} \textbf{(1)}~We propose \method, a novel two-stage learner for directly learning rankings of treatment effects from observational data. \textbf{(2)}~We prove that the learning objective inside our \method is Neyman-orthogonal. \textbf{(3)}~We demonstrate that our \method outperforms standard CATE estimators and non-orthogonal rankers across extensive experiments.

\section{Related work}
\label{sec:related_work}

We now review related work on (i)~learning-to-rank, (ii)~CATE estimation, and (iii)~recent methods on ranking individuals by their treatment effects. An extended review on policy learning and uplift modeling is given in Appendix~\ref{app:extended_review}.

\newpage
\textbf{Learning-to-rank.} Learning-to-rank refers to supervised learning methods for learning an ordering over a set of items \cite{liu:2011:learning}. These approaches originate from information retrieval, where the goal is typically to rank documents for a given query based on relevance judgments. Existing methods can be classified into three broad categories depending on the type of supervision used during training: (i)~\emph{pointwise methods} define a loss using item-level relevance labels; (ii)~\emph{pairwise methods} rely on relevance comparisons between pairs of items and learn relative preferences \citep[e.g.,][]{burges:2005:learning}; and (iii)~\emph{listwise methods} are supervised using entire ranked lists or permutations of items by relevance \citep[e.g.,][]{buyl:2023:rankformer, cao:2007:learning}. However, learning-to-rank methods are \underline{\textbf{not}} directly applicable in our setting because the required supervision is unavailable since treatment effects are never directly observed.

\textbf{CATE estimation.} There is an extensive literature on estimating heterogeneous treatment effects from observational data, often formalized by the \emph{conditional average treatment effect} (CATE). At a high level, existing approaches can be classified as \emph{model-based} or \emph{model-agnostic}. Model-based methods rely on specific machine learning models designed for treatment effect estimation, such as causal forests \cite{wager:2018:estimation} or specialized neural networks \cite{shalit:2017:estimating}. Model-agnostic methods, also referred to as two-stage learners or meta-learners, define generic estimation procedures that can be instantiated with arbitrary machine learning models \cite{kunzel:2019:metalearners}.

State-of-the-art CATE estimators are \emph{Neyman-orthogonal} two-stage learners, including the DR- and R-learners \cite{kennedy:2023:optimal, nie:2021:quasioracle}. In the first stage, so-called \emph{nuisance functions} are estimated, which capture components of the data-generating process, like the treatment propensity and the conditional outcomes. In the second stage, treatment effects are estimated by optimizing a learning objective constructed from these nuisance estimates. Neyman-orthogonality ensures robustness of the second-stage model to estimation errors in the nuisance functions \cite{foster:2023:orthogonal}. However, the above methods are designed to estimate the CATE \emph{magnitudes}, but are \underline{\textbf{not}} designed to directly learn a \emph{ranking} of the CATEs.

\textbf{Ranking of treatment effects.}
Recently, two works have explicitly considered the problem of ranking individuals by their treatment effects. \citet{kamran:2024:learning} propose a tree-based method that directly optimizes a non-differentiable ranking criterion based on doubly robust pseudo outcomes (we refer to this method as \emph{tree ranker}). In contrast, \citet{vanderschueren:2024:metalearners} do not propose a new ranking method, but empirically evaluate different combinations of standard CATE estimators and conventional learning-to-rank objectives (our \emph{plug-in ranker} baseline closely follows this strategy). However, \underline{\textbf{neither}} approach is Neyman-orthogonal, implying that the resulting learning objectives are sensitive to estimation errors in the nuisance functions.

\textbf{Research gap.}
Taken together, existing approaches either focus on estimating CATE magnitudes, or learn treatment-effect rankings using non-orthogonal learning objectives or model-specific classes. As summarized in Table~\ref{tab:method_comparison}, this leaves a gap for model-agnostic two-stage learners that directly target ranking, while remaining Neyman-orthogonal.

\begin{table}[h]
\centering
\scriptsize
\setlength{\tabcolsep}{3pt}
\begin{tabular}{p{0.25\columnwidth}ccc p{0.4\columnwidth}}
\toprule
Method & Rank. & Agn. & Orth. & Reference \\
\midrule
T-learner    & \xmark & \cmark & \xmark & \citet{kunzel:2019:metalearners} \\
DR-learner   & \xmark & \cmark & \cmark & \citet{kennedy:2023:optimal} \\
\midrule
Tree ranker & \cmark & \xmark & \xmark & \citet{kamran:2024:learning} \\
Plug-in ranker     & \cmark & \cmark & \xmark & cf. \citet{vanderschueren:2024:metalearners} \\
\midrule
\textbf{\emph{Rank-learner}} (ours) & \cmark & \cmark & \cmark & \emph{This work} \\
\bottomrule
\end{tabular}
\vspace{0.2cm}
\caption{\textbf{Positioning of methods.} \emph{Rank.}: directly targets treatment effect ranking (rather than CATE magnitudes). \emph{Agn.}: model-agnostic method that can be instantiated with arbitrary machine learning models. \emph{Orth.}: Neyman-orthogonal learning objective with respect to nuisance estimation. T- and DR-learner are representative examples of two-stage learners for CATE estimation. Main experiments focus on two-stage learners, a comparison to \emph{tree ranker} is given in Appendix~\ref{app:additional_comparisons} (Table~\ref{tab:additional_comp}).}
\vspace{-0.7cm}
\label{tab:method_comparison}
\end{table}

\section{Problem setup}

\textbf{Data.} We consider a standard causal inference setting with observations $W = (X, T, Y) \sim \mathbb{P}$, where $X \in \mathcal{X} \subseteq \mathbb{R}^d$ represents covariates, $T \in \{0,1\}$ is a binary treatment, and $Y \in \mathbb{R}$ is a continuous outcome. We assume that we have a dataset $\{x_i, t_i, y_i\}_{i=1}^{n}$ from $n \in \mathbb{N}$ individuals, sampled i.i.d. from $\mathbb{P}$. We build upon the potential outcomes framework \cite{rubin:2005:causal}, meaning that each individual in the population has two potential outcomes, $Y(1)$ and $Y(0)$, where $Y(t)$ represents the outcome that we would observe had the treatment been $T=t$. Then, the CATE is defined as
\begin{equation}
    \tau(x) = \mathbb{E}[Y(1) - Y(0) \mid X=x].
\end{equation}

\textbf{Identification.} Since potential outcomes are not directly observed, we rely on standard identification assumptions under which the CATE can be expressed in terms of the observed data \cite{imbens:2015:causal, rosenbaum:1983:central}.

\begin{assumption}[Standard causal inference assumptions] \label{ass:identification}
For all $t \in \{0,1\}$ and $x \in \mathcal{X}$, it holds: (i)~\emph{Consistency}: The observed outcome equals the potential outcome under the received treatment: $Y = Y(t)$ when $T=t$. (ii)~\emph{Positivity}: Each individual has a non-zero probability of receiving either treatment: $0 < e(x) < 1$ where $e(x) = P(T=1 \mid X=x)$ is the propensity score. (iii)~\emph{Unconfoundedness}: There are no unmeasured confounders: $Y(t) \perp\!\!\!\perp T \mid X.$
\end{assumption}

Under Assumption~\ref{ass:identification}, the CATE is identified as
\begin{align}
\label{eq:identification}
    \tau(x) = \mu_1(x) - \mu_0(x),
\end{align}
where $\mu_t(x) = \mathbb{E}[Y \mid T=t, X=x]$ are the conditional outcome regressions, referred to as \emph{response surfaces}. Together with the propensity score $e(x)$, these response surfaces $\mu_1(x)$ and $\mu_0(x)$ constitute the \emph{nuisance functions}, which we denote collectively as $\eta = (\mu_1, \mu_0, e)$.

\textbf{Objective.}
In this work, we focus on ranking individuals by their treatment effects $\tau(x)$. To this end, we consider a real-valued \emph{scoring function} $g: \mathcal{X} \rightarrow \mathbb{R}$, which is used to rank individuals based on their covariates. We require that the ordering of $g(x)$ agrees with that of $\tau(x)$, meaning that, for any $x, x' \in \mathcal{X}$, we have
\begin{equation}
g(x) > g(x') \quad \text{whenever} \quad \tau(x) > \tau(x').
\end{equation}
Equivalently, any scoring function of the form $g(x) = h(\tau(x))$, with $h$ strictly increasing, induces the same ordering as $\tau(x)$. Throughout the paper, we assume that ties in $\tau(x)$ do not occur. Note that this task involves only ranking, and the magnitudes of $\tau(x)$ are \emph{not} of direct interest.

\section{Principles of orthogonal learning}
\label{sec:orthogonalbasics}

\textbf{Bias of plug-in rankers.} A simple approach is to use \emph{plug-in rankers}, which proceed by first estimating the nuisance functions $\hat\mu_t(x)$ (the response surfaces) from the observed data, and plugging these estimates into the identification formula from Eq.~\eqref{eq:identification} to obtain $\hat\tau(x) = \hat\mu_1(x) - \hat\mu_0(x)$. These treatment effect estimates can then be used to construct the supervision targets for any learning-to-rank objective (examples discussed below), and $g$ is fit by minimizing the corresponding empirical loss $\mathcal{L}^{\text{plug}}(g, \hat\eta)$ (where $\hat\eta$ collects the estimated response surfaces). However, it is well established that such plug-in approaches suffer from \emph{plug-in bias} \cite{kennedy:2024:semiparametric}: estimation errors in the nuisance functions enter the training objective directly, and spill over into the fitted scoring function $g$.

\textbf{Neyman-orthogonality.} Plug-in bias can be addressed by developing learning objectives that are \emph{Neyman-orthogonal} with respect to the nuisance functions. Concretely, one first estimates the nuisances $\hat\eta$ and then fits the target model by minimizing an objective $\mathcal{L}^{\text{corr}}(g, \hat\eta)$ designed to be first-order insensitive to nuisance error. Formally, a loss is \emph{orthogonal} if, for any perturbation directions $\Delta g$ and $\Delta\eta$,
\begin{equation}
    D_{\eta}D_{g}\,\mathcal{L}^{\text{corr}}(g^0,\eta^0)[\Delta g,\,\Delta\eta] = 0
\end{equation}
where $D_{\eta}$ and $D_{g}$ denote directional derivatives, $\eta^0$ are the true nuisance functions and $g^0$ is a population minimizer of $\mathcal{L}^{\text{corr}}(g,\eta^0)$ \cite{foster:2023:orthogonal, chernozhukov:2018:double}. Intuitively, this property means that the gradient of the loss with respect to $g$ is robust against estimation errors in the nuisances. Orthogonality usually implies additional favorable properties, such as fast convergence rates \cite{foster:2023:orthogonal, nie:2021:quasioracle}.

\textbf{Constructing orthogonal losses.}
A common strategy to obtain a Neyman-orthogonal objective is to start from the plug-in loss $\mathcal{L}^{\text{plug}}(g,\eta)$ and apply a correction based on its \emph{influence function} \cite{kennedy:2024:semiparametric, foster:2023:orthogonal}. Intuitively, the influence function measures the first-order sensitivity of the population objective to infinitesimal perturbations of the data-generating distribution $\mathbb{P}$ (for background and details, we refer to Appendix~\ref{app:background_orthogonal}). Orthogonal learning has recently been explored in novel applications, such as preference-based LLM evaluation \citep[e.g.][]{frauen:2026:nonparametric}. However, to the best of our knowledge, Neyman-orthogonal learning objectives for learning-to-rank treatment effects are currently missing.

\section{Orthogonal ranking of treatment effects}
\label{sec:orthogonalloss}

\textbf{Motivation.} In this section, we first show that standard CATE learning solves a harder problem than required for ranking, and motivate a pairwise ranking objective instead. We then introduce a smooth surrogate loss that enables orthogonalization, and present our main contribution: a novel Neyman-orthogonal ranking loss that targets the treatment effect ordering directly. This learning objective forms the basis of our \method (Section~\ref{sec:method}).

\textbf{Why CATE learning is harder than needed for ranking.} A na{\"i}ve strategy to learn the treatment effect ordering is to first estimate CATEs and then rank individuals by these estimates. This corresponds to learning a scoring function $g$ by minimizing the mean squared error loss
\begin{equation}
    \mathcal{L}^{\text{cate}}(g, \eta) = \mathbb{E}_{X}\left[\big( g(X) - \tau(X)\big)^2\right] ,
\end{equation}
which is the canonical population objective for standard CATE estimation \cite{morzywolek:2024:weighted}. The loss $\mathcal{L}^{\text{cate}}$ is uniquely minimized at $g(x)=\tau(x)$, and therefore requires recovering the \emph{full} CATE function. While this, of course, gives the correct ranking, it is also unnecessarily difficult because we do not need correct treatment effect magnitudes. Below, we relax the optimization objective accordingly.

\textbf{Why a ranking loss is preferred.}
Since ranking only depends on the \emph{relative} ordering of treatment effects, we consider the pairwise ranking objective \cite{burges:2005:learning}\footnote{Throughout, $\mathbb{E}_{X,X'}$ denotes the expectation over i.i.d.\ draws $X,X'\sim\mathbb{P}(X)$, and $\mathbb{E}_{W,W'}$ over  $W,W'\sim\mathbb{P}$ analogously.}
\begin{equation}
    \mathcal{L}^{\text{bin}}(g, \eta) = \mathbb{E}_{X, X'} \Big[ \ell\big(p_g(X, X'), \;b_{\tau}(X, X')\big) \Big],
\end{equation}
where $\ell(p,t) =-t\log p-(1-t)\log(1-p)$ denotes the binary cross-entropy loss (we will use this shorthand notation throughout), and 
\begin{align}
p_g(X,X') &= \sigma\big(g(X)-g(X')\big), \\[0.15cm]
b_{\tau}(X,X') &= \mathbf{I}\{\tau(X)>\tau(X')\}, \label{eq:indicators}
\end{align}
where $\sigma(\cdot)$ is the logistic sigmoid and $\mathbf{I}\{ \cdot \}$ the indicator function. Here, $p_g(X,X')$ can be interpreted as a \emph{pairwise preference probability} that encodes the model's confidence that $X$ should be ranked ahead of $X'$. The label $b_{\tau}(X,X')$ provides the corresponding supervision by indicating whether $X$ has a larger treatment effect than $X'$. Optimizing this loss yields a single scoring function $g$ that can be used to rank individuals directly at inference.

At the population level, the infimum of $\mathcal{L}^{\text{bin}}$ (which is not attainable by any finite $g$) can be approached arbitrarily closely by \emph{any scoring function that preserves the treatment effect ordering}. In particular, any $g(x)=h(\tau(x))$ with $h$ strictly increasing is population-optimal in this sense. This implies invariance to the form $h$, as long as it preserves the effect ordering. In contrast to $\mathcal{L}^{\text{cate}}$, which identifies treatment effect magnitudes, $\mathcal{L}^{\text{bin}}$ imposes only ordering constraints on $g$. Since our goal is to recover only the ordering of treatment effects, we therefore focus on the \emph{easier} loss $\mathcal{L}^{\text{bin}}$ (see Appendix~\ref{app:minimizers} for details).

\textbf{Smooth surrogate ranking loss.} So far, we have assumed oracle access to $\tau(x)$, but, in observational data, the treatment effect $\tau(x)$ is unobserved. Simply replacing $\tau(x)$ with an estimate $\hat\tau(x)$ suffers from plug-in bias, motivating an influence-function-based correction (cf. Section~\ref{sec:orthogonalbasics}). However, to orthogonalize $\mathcal{L}^{\text{bin}}$ via the influence function, the loss must depend smoothly on the treatment effects (and more generally, on the nuisance components $\eta$ of the data-generating process). Since $\mathcal{L}^{\text{bin}}$ uses the indicator targets $b_{\tau}(X, X')$ that are discontinuous, it is non-differentiable with respect to $\tau$. To overcome this, we replace these binary targets with smooth probabilistic surrogates and consider the soft ranking loss \citep[cf.][]{vanderschueren:2024:metalearners}
\begin{equation}
    \mathcal{L}^{\text{soft}}(g, \eta) = \mathbb{E}_{X, X'}\Big[ \ell\big(p_g(X, X'), \;t_{\tau}(X,X')\big) \Big],
\end{equation}
where 
\begin{align}
t_{\tau}(X,X') &= \sigma\left(\frac{\tau(X) -\tau(X')}{\kappa}\right), \label{eq:softtargets}
\end{align}
and $\kappa>0$ is a smoothness parameter. These targets can be interpreted as the probability that $X$ has a larger treatment effect than $X'$. The parameter $\kappa$ controls how sharp this comparison is: smaller values push the targets closer to the binary indicators $b_{\tau}(X,X') \in \{0,1\}$, larger values produce softer targets closer to $0.5$. Crucially, this smoothness is what allows us to orthogonalize the ranking objective.

\textbf{Our novel Neyman-orthogonal ranking loss.} 
Since the targets in $\mathcal{L}^{\text{soft}}$ are smooth in $\tau$, we can now orthogonalize this loss based on its influence function. Doing so introduces a correction that is expressed in terms of the full observational units $W=(X,T,Y)$, and leads to the following theorem.

\begin{theorem}[Neyman-orthogonality]
\label{thm:orthogonality}
We define the loss
\begin{equation}
\mathcal{L}^{\textup{orth}}(g,\eta)
= \mathbb{E}_{W,W'}
\big[\ell\big(p_g(X,X'),\,\tilde t_{\eta}(W,W')\big)\big],
\end{equation}
with pseudo labels
\begin{equation}
\tilde t_{\eta}(W,W') = t_{\tau}(X,X') + \omega_{\tau}(X,X')\,\Delta_{\eta}(W,W'), \label{eq:pseudolabels}
\end{equation}
where
\begin{align}
    \omega_{\tau}(X,X') &= \frac{1}{\kappa}\;t_{\tau}(X,X')\big(1-t_{\tau}(X,X')\big),  \label{eq:omega} \\[0.1cm]
    \Delta_{\eta}(W,W') &= \big(\phi_{\eta}(W)-\tau(X)\big)-\big(\phi_{\eta}(W')-\tau(X')\big), \label{eq:delta} 
\end{align}
with $\phi_\eta(W)$ the doubly robust score
\begin{align}
\phi_{\eta}(W) &= \frac{T}{e(X)}\big(Y-\mu_1(X)\big)
- \frac{1-T}{1-e(X)}\big(Y-\mu_0(X)\big) \nonumber\\
&\quad + \mu_1(X)-\mu_0(X). 
\end{align}
Then the loss $\mathcal{L}^{\textup{orth}}(g, \eta)$ is Neyman-orthogonal with respect to the nuisance components $\eta = (\mu_1, \mu_0, e)$.
\end{theorem}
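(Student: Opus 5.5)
We will verify the defining condition directly: for arbitrary directions $\Delta g$ and $\Delta\eta=(\Delta\mu_1,\Delta\mu_0,\Delta e)$, we will show that $D_\eta D_g\mathcal{L}^{\textup{orth}}(g^0,\eta^0)[\Delta g,\Delta\eta]=0$. The first step uses the fact that the binary cross-entropy is affine in its second argument: $\ell(p,t)=-t\log p-(1-t)\log(1-p)$. Writing $p=p_g(X,X')$, the derivative with respect to $g$ in direction $\Delta g$ is
\begin{equation*}
D_g\mathcal{L}^{\textup{orth}}(g,\eta)[\Delta g]=\mathbb{E}_{W,W'}\big[(p_g(X,X')-\tilde t_\eta(W,W'))\,(\Delta g(X)-\Delta g(X'))\big],
\end{equation*}
because $\partial_z\ell(\sigma(z),t)=\sigma(z)-t$. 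Since $p_g$ does not depend on $\eta$, it follows that
\begin{equation*}
D_\eta D_g\mathcal{L}^{\textup{orth}}[\Delta g,\Delta\eta]=-\mathbb{E}_{W,W'}\big[D_\eta\tilde t_\eta(W,W')[\Delta\eta]\,(\Delta g(X)-\Delta g(X'))\big].
\end{equation*}
Note that this expression does not involve $g^0$ at all. It therefore suffices to show that $\mathbb{E}[D_\eta\tilde t_\eta(W,W')[\Delta\eta]\mid X,X']=0$ at $\eta^0$ for every pair $(X,X')$. We will then conclude by the tower property, since $\Delta g(X)-\Delta g(X')$ is $(X,X')$-measurable.

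Next, we differentiate the pseudo label using the product rule:
\begin{equation*}
D_\eta\tilde t_\eta=D_\eta t_\tau+(D_\eta\omega_\tau)\,\Delta_\eta+\omega_\tau\,D_\eta\Delta_\eta.
\end{equation*}
Here $t_\tau$ and $\omega_\tau$ depend on $\eta$ only through $\tau=\mu_1-\mu_0$. Using $\sigma'=\sigma(1-\sigma)$, the first term is
\begin{equation*}
D_\eta t_\tau=\omega_\tau\,\big(\Delta\tau(X)-\Delta\tau(X')\big),\qquad \Delta\tau=\Delta\mu_1-\Delta\mu_0.
\end{equation*}
For the middle term we need $\mathbb{E}[\Delta_{\eta^0}(W,W')\mid X,X']=0$. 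This holds because $W$ and $W'$ are independent and $\mathbb{E}[\phi_{\eta^0}(W)\mid X]=\tau(X)$. That identity follows from the standard unbiasedness of the doubly robust score under Assumption~\ref{ass:identification}, using $\mathbb{E}[T(Y-\mu_1(X))\mid X]=0$ and the analogous statement for the control arm. Since $D_\eta\omega_\tau$ is a function of $(X,X')$ only, the middle term therefore vanishes in conditional expectation.

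The remaining, main step is the third term. We must compute $D_\eta\phi_\eta(W)[\Delta\eta]-\Delta\tau(X)$ and show that its conditional mean given $X$ is $-\Delta\tau(X)$ plus zero. We will check this directly:
\begin{equation*}
D_\eta\phi_\eta=-\frac{T}{e}\Delta\mu_1+\frac{1-T}{1-e}\Delta\mu_0-\frac{T(Y-\mu_1)}{e^2}\Delta e-\frac{(1-T)(Y-\mu_0)}{(1-e)^2}\Delta e+\Delta\mu_1-\Delta\mu_0.
\end{equation*}
Taking conditional expectations at $\eta^0$, using $\mathbb{E}[T\mid X]=e(X)$ and the vanishing residual means (positivity ensures the ratios are well defined), the $\Delta e$ terms drop out. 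The $\Delta\mu$ terms cancel to give $\mathbb{E}[D_\eta\phi_\eta\mid X]=0$, which is the usual Neyman-orthogonality of the DR score. Consequently $\mathbb{E}[D_\eta\Delta_\eta\mid X,X']=-(\Delta\tau(X)-\Delta\tau(X'))$, so the third term contributes $-\omega_\tau(\Delta\tau(X)-\Delta\tau(X'))$. This exactly cancels the first term, giving $\mathbb{E}[D_\eta\tilde t_\eta\mid X,X']=0$.

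The main obstacle I expect is bookkeeping rather than conceptual. It has two parts: being careful that $\tau$ inside $\Delta_\eta$ is itself perturbed (the subtracted $\tau(X)$ is what produces the cancelling $-\Delta\tau$), and justifying the interchange of differentiation and expectation. For the latter we will assume bounded outcomes, propensities bounded away from $0$ and $1$, and bounded directions, so that dominated convergence applies.
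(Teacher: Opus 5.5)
Your proposal is correct and follows essentially the same route as the paper's proof. Both compute $D_g\mathcal{L}^{\textup{orth}}$ via $\partial_z\ell(\sigma(z),t)=\sigma(z)-t$, reduce the cross-derivative to the conditional mean of $D_\eta\tilde t_\eta$ given $(X,X')$, and show this vanishes using the product rule, $\mathbb{E}[\phi_{\eta^0}(W)-\tau^0(X)\mid X]=0$, and the zero conditional means of the DR-score derivatives. The differences are only organizational: you perturb $(\mu_1,\mu_0,e)$ jointly rather than one component at a time, and you obtain the cancellation of $D_\eta t_\tau$ against $-\omega_\tau\big(\Delta\tau(X)-\Delta\tau(X')\big)$ from the orthogonality of the DR score, whereas the paper collects the factors $(1-T/e^0)$ and $\big((1-T)/(1-e^0)-1\big)$ pointwise.
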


\emph{Proof.} See Appendix~\ref{appendix:proof_orthogonality}.

Theorem~\ref{thm:orthogonality} shows that the proposed objective $\mathcal{L}^{\text{orth}}(g,\eta)$ is first-order insensitive to nuisance estimation error, but we need to verify that it still targets the correct treatment effect ordering. Therefore, we characterize its population minimizers evaluated at the true nuisance functions.

\begin{theorem}[Minimizers of the orthogonal loss]

\label{thm:pop_minimizers}
Let $\eta^0$ denote the true nuisance functions, and let $\tau^0$ be the corresponding CATE. For any fixed and finite $\kappa>0$, the orthogonal ranking loss $\mathcal{L}^{\text{orth}}(g, \eta^0)$ is minimized by any scoring function of the form
\begin{equation}
    g(x) = \frac{1}{\kappa}\,\tau^0(x) + c, \label{eq:minima}
\end{equation}
for some constant $c \in \mathbb{R}$, and therefore recovers the treatment effect ordering.
\end{theorem}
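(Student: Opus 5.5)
The plan is to reduce $\mathcal{L}^{\text{orth}}(g,\eta^0)$ to the soft loss $\mathcal{L}^{\text{soft}}(g,\eta^0)$ up to an additive constant, and then minimize the soft loss pointwise. The first step uses the linearity of the binary cross-entropy in its target argument:
\[
\ell(p,t) = -t\log p - (1-t)\log(1-p) = -\log(1-p) - t\log\frac{p}{1-p}.
\]
With $p = p_g(X,X')$ we have $\log\frac{p}{1-p} = g(X)-g(X')$. Hence
\[
\ell\big(p_g,\tilde t_{\eta^0}\big) = \ell\big(p_g,t_{\tau^0}\big) - \omega_{\tau^0}(X,X')\,\Delta_{\eta^0}(W,W')\,\big(g(X)-g(X')\big).
\]
Note that $\tilde t$ may fall outside $[0,1]$. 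This is harmless because the expression is affine in the target.

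The second step shows that the correction term has zero expectation. Since $\omega_{\tau^0}$ and $g(X)-g(X')$ depend only on $(X,X')$, I condition on $(X,X')$. Because $W$ and $W'$ are independent,
\[
\mathbb{E}\big[\Delta_{\eta^0}(W,W')\mid X,X'\big] = \mathbb{E}\big[\phi_{\eta^0}(W)-\tau^0(X)\mid X\big] - \mathbb{E}\big[\phi_{\eta^0}(W')-\tau^0(X')\mid X'\big].
\]
Under Assumption~\ref{ass:identification}, the doubly robust score satisfies $\mathbb{E}[\phi_{\eta^0}(W)\mid X]=\tau^0(X)$. The reason is that $\mathbb{E}[T(Y-\mu_1^0(X))\mid X]=0$, and the analogous identity holds for the control arm. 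So both terms vanish. It follows that $\mathcal{L}^{\text{orth}}(g,\eta^0)=\mathcal{L}^{\text{soft}}(g,\eta^0)$, provided the relevant moments are finite. I would state this integrability as a mild condition: for example, $g$ with finite second moment, $Y$ square-integrable, and bounded $\omega$, which is automatic since $\omega\le 1/(4\kappa)$.

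The third step minimizes $\mathcal{L}^{\text{soft}}$ by pointwise comparison with the entropy. For fixed $t\in(0,1)$, the map $p\mapsto \ell(p,t)$ is minimized at $p=t$, with excess $\mathrm{KL}(t\,\|\,p)\ge 0$. Consider $g(x)=\tau^0(x)/\kappa + c$. Then
\[
p_g(X,X') = \sigma\big((\tau^0(X)-\tau^0(X'))/\kappa\big) = t_{\tau^0}(X,X')
\]
for every pair. So this $g$ attains the pointwise lower bound $\mathbb{E}_{X,X'}[H(t_{\tau^0}(X,X'))]$, where $H$ is the binary entropy, and is therefore a global minimizer. Since $\kappa>0$, the map $x\mapsto \tau^0(x)/\kappa+c$ is strictly increasing in $\tau^0$ and recovers the ordering.

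The main obstacle I expect is the second step's rigor: justifying the conditional-expectation and Fubini arguments and the finiteness of the loss. One point needs explicit care. Replacing $t$ with $\tilde t$ in $\ell$ is only meaningful through the affine extension, since the "cross-entropy" is no longer a proper KL when $\tilde t\notin[0,1]$. I would therefore phrase the argument entirely through the affine decomposition, and never claim that $p=\tilde t$ is a pointwise minimizer.

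As an optional remark, one could add that minimizers are unique up to the additive constant $\mu$-almost surely. This follows because a zero KL excess forces $g(X)-g(X')=(\tau^0(X)-\tau^0(X'))/\kappa$ almost surely.
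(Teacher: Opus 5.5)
Your proposal is correct and takes essentially the same route as the paper. The paper also uses that binary cross-entropy is affine in the label, so that given $(X,X')$ the doubly robust correction $\omega_{\tau^0}\Delta_{\eta^0}$ averages out, each pair's contribution reduces to that of $\mathcal{L}^{\text{soft}}(g,\eta^0)$, and this is minimized at $p_g=t_{\tau^0}$, attained jointly by $g=\tau^0/\kappa+c$; your explicit decomposition just makes the step cleaner when pseudo labels leave $[0,1]$.

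One minor point on your integrability aside: under mere positivity $0<e^0<1$, the condition $\mathbb{E}[Y^2]<\infty$ does not by itself make $\phi_{\eta^0}(W)$ integrable, so you would want strict overlap or to assume $\phi_{\eta^0}\in L^2$ directly.
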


\emph{Proof.} See Appendix~\ref{app:minimizers}.

Together, Theorem~\ref{thm:orthogonality} (Neyman-orthogonality) and Theorem~\ref{thm:pop_minimizers} (correct population minimizers)\footnote{Note that $\mathcal{L}^{\text{orth}}$ and $\mathcal{L}^{\text{soft}}$ share the same set of population minimizers; for details see Appendix~\ref{app:minimizers}.} establish that $\mathcal{L}^{\text{orth}}(g,\eta)$ targets the correct treatment effect ordering while remaining robust to nuisance estimation errors. Table~\ref{tab:loss_summary} summarizes the four considered learning objectives, their population optima, the induced constraints on $g$, and whether they are Neyman-orthogonal.

\begin{table*}[t]
\centering
\footnotesize
\renewcommand{\arraystretch}{1.7}
\begin{tabular}{p{6cm} p{3.2cm} p{4.6cm} c}
\hline
\textbf{Learning objective} &
\textbf{Population optimum} &
\textbf{Constraint on $g$ (intuition)} &
\textbf{Orthogonal} \\
\hline

$\displaystyle 
\mathcal{L}^{\text{cate}}(g,\eta)=\mathbb{E}\!\left[(g(X)-\tau(X))^2\right]
$ &
$\displaystyle g(x)=\tau(x)$ &
Identifies the full CATE function (learns magnitudes). &
\xmark \\[0.25cm]

\hdashline

$\displaystyle
\mathcal{L}^{\text{bin}}(g,\eta)=\mathbb{E}\!\left[\ell\!\left(p_g(X,X'),\, b_{\tau}(X,X')\right)\right]
$ &
$\displaystyle g(x)=h\big(\tau(x)\big)$\newline with $h$ strictly increasing &
Identifies the CATE ordering\newline (learns ranking, not magnitudes). &
\xmark \\[0.25cm]

\hdashline

$\displaystyle
\mathcal{L}^{\text{soft}}(g,\eta)=\mathbb{E}\!\left[\ell\!\left(p_g(X,X'),\, t_{\tau}(X,X')\right)\right]
$ 

&
\multirow{2}{*}{\parbox[c][5\baselineskip][c]{3.2cm}{
$\displaystyle g(x)=\frac{1}{\kappa}\tau(x)+c$}} &
\multirow{2}{=}{\parbox[c][5\baselineskip][c]{4.6cm}{
Identifies a scaled CATE up to a shift\newline
(interpolates between ranking and magnitudes via parameter $\kappa$).}} &
\xmark \\[0.25cm]

\cdashline{1-1}\cdashline{4-4}

\cellcolor{blue!5}$\displaystyle
\mathcal{L}^{\text{orth}}(g,\eta)=\mathbb{E}\!\left[\ell\!\left(p_g(X,X'),\, \tilde t_{\eta}(W,W')\right)\right]
$ &
& &
\cmark \\[0.1cm]

\hline
\end{tabular}
\vspace{0.2cm}
\caption{\textbf{Summary of learning objectives.} We compare four learning objectives, their population optima evaluated at the true nuisance functions, the induced constraints on the scoring function $g$, and whether they are Neyman-orthogonal. For $\mathcal{L}^{\text{bin}}$, the ``optimum'' denotes the class of order-preserving scoring functions that can approach the infimum arbitrarily closely (not attainable by any finite $g$). Our proposed orthogonal objective is \hlblue{highlighted}.}
\vspace{-0.6cm}
\label{tab:loss_summary}
\end{table*}

\textbf{Intuition of the pseudo labels.}
Intuitively, the pseudo labels in Eq.~\eqref{eq:pseudolabels} can be viewed as the soft ranking targets augmented with an orthogonal correction. Note that the correction term itself is the product of \emph{(i)} an uncertainty-dependent weight $\omega_{\tau}$ and \emph{(ii)} a difference of doubly robust scores $\Delta_{\eta}$. Hence, when the (plug-in) soft target is close to $0$ or $1$, the implied pairwise ordering of treatment effects is non-ambiguous and the weight is small, so the pseudo label remains close to the soft target. In contrast, when the soft target is near $0.5$, the pairwise ordering tends to be ambiguous and the weight becomes large, thus activating the correction and shifting the pseudo label according to the doubly robust score difference.

The smoothness parameter $\kappa$ controls \emph{when} and \emph{how strongly} this correction is applied through the weight in Eq.~\eqref{eq:omega}. Here, smaller values of $\kappa$ sharpen the soft targets, pushing them closer to $0$ or $1$ and reducing the number of pairs for which the correction is active. At the same time, for the remaining ambiguous pairs, the correction weight increases due to the $(1/\kappa)$ scaling. As a result, the orthogonal correction concentrates on fewer, harder pairs, but with a larger impact on their pseudo labels.

\textbf{Behavior of the orthogonal ranking loss.}
The population minimizers of the orthogonal loss $\mathcal{L}^{\text{orth}}$ in Eq.~\eqref{eq:minima} require learning a scaled version of the CATE function up to an additive constant, with a scaling factor of $(1/\kappa)$. As the smoothness parameter $\kappa \to 0$, this scaling factor diverges and the minimizer becomes unbounded. At the same time, decreasing $\kappa$ also changes the \emph{shape} of the loss around the optimum. The loss becomes increasingly flat along directions that preserve the treatment effect ordering: deviations of a candidate scoring function $g$ from the population minimizers that do not change the induced ranking are only weakly penalized. This behavior reflects that, in this limit, the orthogonal loss $\mathcal{L}^{\text{orth}}$ approaches the binary ranking loss $\mathcal{L}^{\text{bin}}$, which recovers only the ordering of treatment effects and does not admit a finite population minimizer (see Appendix~\ref{app:kappainterpret} for details).  

Consequently, smaller values of $\kappa$ progressively reduce the learning objective from recovering the full shape of the CATE towards a pure ranking problem. As discussed above, decreasing $\kappa$ also amplifies the orthogonal correction for pairs with similar treatment effects, which increases the variability of the pseudo labels in finite samples. The choice of $\kappa$ therefore reflects a bias-variance trade-off; we propose a practical strategy for selecting the value in the next section.

\section{\method: A two-stage learner for ranking treatment effects}
\label{sec:method}

In this section, we introduce \method, an orthogonal two-stage learner for ranking individuals by their treatment effects from observational data. \method proceeds as follows. \circled{1} In the first stage, we estimate nuisance functions $\hat\eta$ via cross-fitting using flexible machine learning models. \circled{2} In the second stage, we fit a scoring function $g$ by minimizing the empirical orthogonal ranking objective $\mathcal{L}^{\text{orth}}(g, \hat\eta)$ on a random subsample of the training pairs. \circled{3}~At inference time, the scoring function $\hat{g}$ can be used directly to rank individuals, without having to perform any pairwise comparisons. An overview of our \method is shown in Figure~\ref{fig:flowchart}; we now describe each stage in detail.

\circled{1} \textbf{Nuisance estimation.} In the first stage, we estimate the nuisance functions $\hat \eta=(\hat \mu_1,\hat \mu_0,\hat e)$, with $\mu_t(x)$ the response surfaces and $e(x)$ the propensity score, using flexible machine learning models. Following standard practice in orthogonal learning, we use cross-fitted estimates $\hat{\eta}$ in the second-stage \cite{chernozhukov:2018:double}.

\circled{2} \textbf{Orthogonal learning.} In the second stage, we train a scoring function $g$ by minimizing the empirical objective 
\begin{equation}
    \mathcal{L}^{\text{orth}}(g,\hat\eta) = \frac{1}{|\mathcal{P}|} \sum_{(i,j) \in \mathcal{P}} \ell\Big(p_g(x_i, x_j)\,, \tilde t_{\hat \eta}(w_i, w_j)\Big)
\end{equation}
where $\mathcal{P}$ is the set of sampled training pairs. For each pair $(i, j) \in \mathcal{P}$, the pairwise prediction of the model is $p_g(x_i, x_j)$ and the corresponding pseudo label is $\tilde t_{\hat\eta}(w_i,w_j)$. Importantly, orthogonality is incorporated \emph{entirely} through these pseudo labels, while the loss itself retains the standard binary cross-entropy form. Consequently, the second-stage learning problem remains identical to standard pairwise ranking, up to the construction of the training labels. 

Since a dataset of size $n$ induces $|\mathcal{P}_{\text{all}}| = n^2$ possible training pairs, the second-stage optimization can become prohibitively expensive. To scale \method, we therefore propose to optimize the loss using only a random subsample of pairs $\mathcal{P} \subset \mathcal{P}_{\text{all}}$ in each epoch, drawn uniformly from all available pairs. In our experiments, we show that ranking performance saturates quickly as the number of sampled pairs increases, suggesting that only a small fraction of pairs is usually sufficient.

We select $\kappa$ by maximizing an out-of-sample ranking criterion on a validation set. Concretely, we use the \emph{area under the targeting operator curve} (AUTOC), which measures the average benefit of treating units in the order induced by a score (here $\hat g$), averaged across all treatment fractions \cite{yadlowsky:2025:evaluating}. Since the AUTOC depends on unobserved treatment effects, we rely on the approximation proposed by \citet{chernozhukov:2025:causal}. The remaining hyperparameters are tuned analogously (see Appendix~\ref{app:impldetails}).

\circled{3} \textbf{Inference.}
The output of the second stage is a fitted scoring function $\hat g:\mathcal{X}\to\mathbb{R}$. Given a target population $\{x_i\}_{i=1}^m$, we compute scores $\hat g(x_i)$ and rank individuals accordingly, with larger values indicating higher predicted priority (i.e., a larger treatment effect in the induced ordering). Pairwise comparisons are only needed during training, since inference requires only pointwise evaluation of $\hat g$. For implementation details, see Appendix~\ref{app:impldetails}.

\begin{figure}[h]
    \centering
    \includegraphics[width=0.75\linewidth]{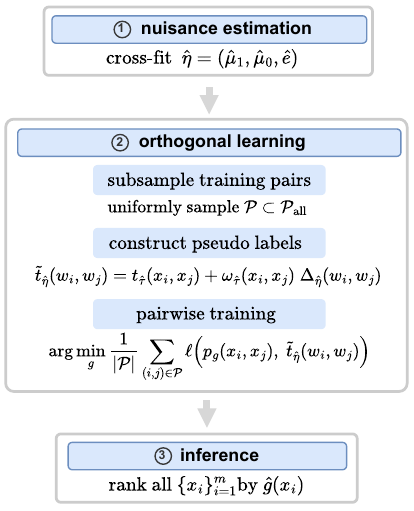}
    \caption{\textbf{Overview of \method.} In the first stage, we estimate nuisance functions (response surfaces and propensity score) via cross-fitting. In the second stage, we subsample training pairs, construct the pseudo labels, and learn a scoring function by minimizing the orthogonal ranking objective. Both stages can be instantiated with arbitrary machine learning models. At inference time, we rank individuals directly by their learned scores $\hat g(x)$.}
    \label{fig:flowchart}
\end{figure}

\section{Experiments}
We evaluate \method on synthetic, semi-synthetic, and real-world benchmarks. Synthetic and semi-synthetic experiments provide access to ground-truth treatment effects and allow controlled evaluation of ranking quality, following standard practice in the causal inference literature \cite{kamran:2024:learning, kennedy:2023:optimal, curth:2021:nonparametric, shalit:2017:estimating}. We additionally evaluate on the real-world \textsc{Criteo} uplift benchmark \cite{diemert:2021:largea}, where the training data is intentionally confounded while evaluation is performed on randomized test data. Full experimental details and extended results are provided in Appendix~\ref{app:experimental_setup_DGPs} and~\ref{app:additional_results}.

\subsection{Experimental design}
\textbf{Setup.}
In the synthetic and semi-synthetic benchmarks, we generate treatment effects as a strictly increasing, non-linear transformation of a latent score $s(x)$, such that the ground-truth ranking is fully determined by $s(x)$, while the treatment effect magnitudes depend on the chosen transformation. In the semi-synthetic setting, we use real covariates from established datasets covering distinct application domains -- \textsc{MovieLens} (recommender systems), \textsc{MIMIC-III} (healthcare), and the \textsc{Current Population Survey} (public policy) -- but construct treatments and outcomes following the same design principles as in the synthetic benchmark \cite{harper:2015:movielens,johnson:2016:mimiciii,flood:2025:ipums}. For further details, we refer to Appendix~\ref{app:syntheticdgp} and~\ref{app:semisynthetic}.

Across the (semi-)synthetic experiments, we evaluate on a fixed test set of 1{,}000 samples and report results over five seeds. On the synthetic benchmark, we vary the training size from $n=100$ to $n=2{,}000$ to study ranking performance as a function of data availability. To separate nuisance estimation and second-stage learning, we use sample splitting: one sample of size $n$ is used to fit the nuisance functions, and an independent sample of the same size $n$ is used to train the second-stage models (each with an internal train/validation split). In the semi-synthetic setting, we fix the training size to $n=1{,}000$ and follow the same protocol. Tables report mean $\pm$ standard deviation over seeds, while figures show mean $\pm$ standard error. 

\begin{table*}[t]
\small
\centering
\caption{\textbf{Synthetic benchmark (main results).} Test AUTOC (mean $\pm$ std dev over five seeds) across training sizes. \emph{Higher is better}. The oracle column reports AUTOC obtained by ranking the test set using the true treatment effects. Best mean is shown in \textbf{bold}.}
\label{tab:synthetic_autoc}
\setlength{\tabcolsep}{10pt}
\begin{tabular}{lccccc|c}
\toprule
Method & $n=100$ & $n=250$ & $n=500$ & $n=1{,}000$ & $n=2{,}000$ & oracle \\
\midrule
T-learner
& 0.88 $\pm$ 0.17
& 0.96 $\pm$ 0.14
& 1.24 $\pm$ 0.05
& 1.32 $\pm$ 0.02
& 1.36 $\pm$ 0.00
& \multirow{4}{*}{1.40} \\
DR-learner
& 0.80 $\pm$ 0.18
& 1.16 $\pm$ 0.12
& 1.28 $\pm$ 0.05
& 1.33 $\pm$ 0.02
& 1.36 $\pm$ 0.02
& \\
Plug-in ranker
& 0.69 $\pm$ 0.32
& 0.95 $\pm$ 0.14
& 1.24 $\pm$ 0.06
& 1.31 $\pm$ 0.02
& 1.36 $\pm$ 0.00
& \\
{Rank-learner (\emph{ours})}
& \textbf{1.00 $\pm$ 0.19}
& \textbf{1.28 $\pm$ 0.03}
& \textbf{1.31 $\pm$ 0.01}
& \textbf{1.34 $\pm$ 0.01}
& \textbf{1.37 $\pm$ 0.00}
& \\
\bottomrule
\end{tabular}
\vskip -0.1in
\end{table*}

\textbf{Baselines and training details.}
We compare \method against the na{\"i}ve CATE-based strategy using the \textbf{T-learner} \cite{kunzel:2019:metalearners} and the \textbf{DR-learner} \cite{kennedy:2023:optimal}, with the latter being Neyman-orthogonal. To isolate the benefit of orthogonalization for ranking, we also include a non-orthogonal \textbf{plug-in ranker} that learns a scoring model~$g$ by optimizing $\mathcal{L}^{\text{soft}}(g, \hat\eta)$. This learning objective uses the soft targets, based on plug-in estimates of the treatment effects, instead of the orthogonal pseudo labels. Together, these baselines allow us to disentangle \emph{(i)}~the benefit of direct ranking versus the strategy based on \textit{full} CATE estimation and \emph{(ii)}~orthogonal versus plug-in objectives for ranking. Additional comparisons against recent CATE baselines are provided in Appendix~\ref{app:additional_comparisons}.

Across our experiments, we use the \emph{same} model architecture and training details for a \emph{fair} comparison. In particular, the nuisance functions and second-stage models are implemented as feedforward neural networks with a single hidden layer and ReLU activations, with linear output layers for regression and sigmoid output layers for classification. Models are trained with Adam \cite{kingma:2015:adam} for up to 50 epochs, with early stopping based on the validation loss, retaining the best model checkpoint. For model selection, we tune the hyperparameters of the ranking methods using the approximated AUTOC on the validation set \cite{chernozhukov:2025:causal}, while CATE estimators are selected using their standard validation loss. Training is computationally lightweight with all models converging within minutes. Further implementation details are in Appendix~\ref{app:impldetails}.

\textbf{Evaluation metrics.}
Our primary evaluation metric for the (semi-)synthetic benchmarks is the AUTOC \citep{yadlowsky:2025:evaluating}. The AUTOC evaluates how well a method ranks individuals by their treatment effects, by averaging the cumulative treatment benefit among the top-ranked individuals over all treatment fractions. In our evaluation, we compute AUTOC on the test set using the ground-truth treatment effects. Unlike AUROC, AUTOC is not normalized and therefore not constrained to $[0,1]$. For the real-world uplift benchmark, ground-truth treatment effects are unknown, and we instead evaluate ranking quality using AUUC. For completeness, we also report the \emph{mean policy value} in Appendix~\ref{app:main_exp_results}, which evaluates the quality of the implied policies.

\subsection{Synthetic benchmark results}
Table~\ref{tab:synthetic_autoc} reports test AUTOC across training sizes on the synthetic benchmark. We draw three main findings. \textbf{(1)} \method consistently outperforms standard pointwise CATE estimators (T- and DR-learners) across all sample sizes, demonstrating the benefit of directly targeting treatment effect ranking rather than recovering effect magnitudes. \textbf{(2)} Compared to the non-orthogonal plug-in ranker, \method achieves systematically higher AUTOC, with the largest improvements observed in small-sample regimes where nuisance estimation error is most pronounced. This highlights the advantage of orthogonalization for ranking. \textbf{(3)} As training size increases, performance differences across methods become smaller, reflecting improved nuisance estimation quality in our controlled setting (see Appendix~\ref{app:main_exp_results}, Table~\ref{tab:nuisance_synth}). Importantly, the relative ordering of methods remains unchanged, with \method achieving the strongest performance throughout. Taken together, these results show that, among the considered baselines, directly targeting the ranking task with an orthogonal learning objective yields the best performance.

\textbf{Pair subsampling.}
Figure~\ref{fig:pair_subsampling_main} shows the computational trade-off from subsampling training pairs in the second-stage ranker, and its effect on ranking performance. \method attains strong performance in terms of AUTOC, even when using only a small fraction of pairs per epoch. This suggests that the second-stage performance is mostly driven by the quality of the nuisance estimates, and that subsampling is an effective way to scale our \method. Additional analyses, including an alternative pair sampling strategy, are provided in Appendix~\ref{app:computational_eff}.

\begin{figure}[h]
    \centering
    \includegraphics[width=0.9\linewidth]{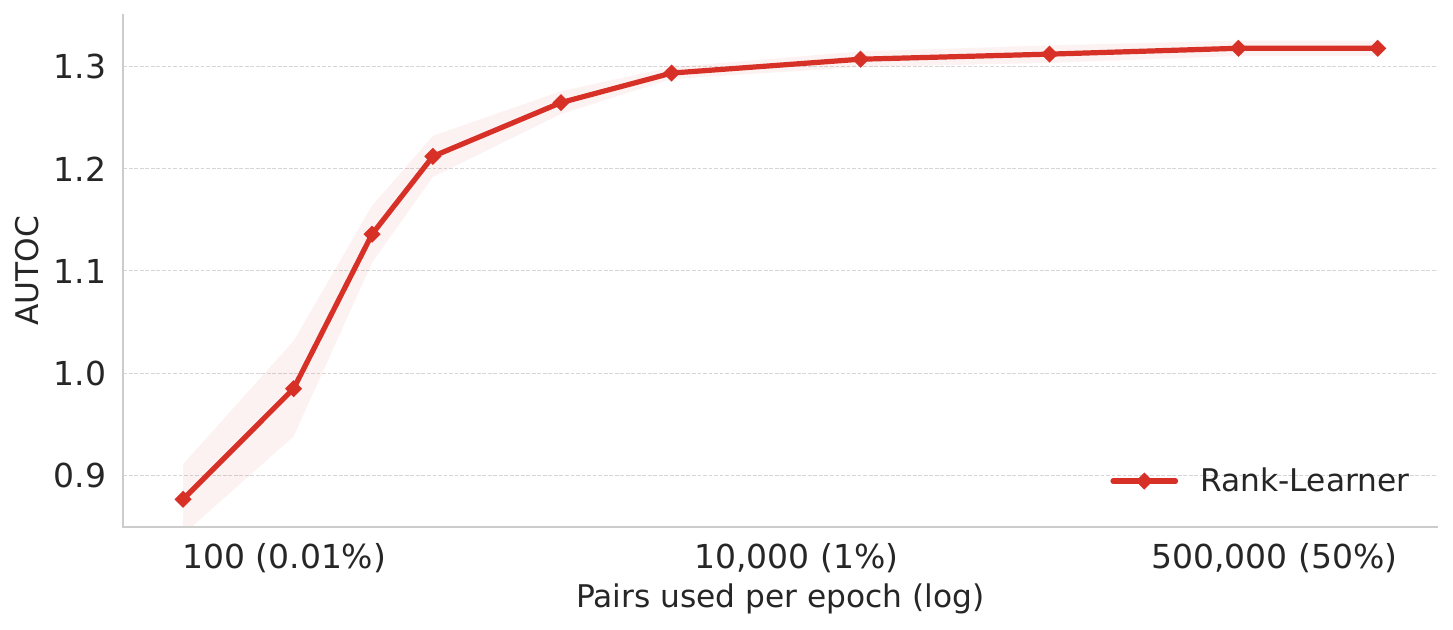}
    \caption{\textbf{Synthetic benchmark (pair subsampling).} Test AUTOC (mean $\pm$ s.e. over five seeds) of \method as a function of the number of sampled training pairs per epoch ($n=1{,}000$ with $n^2=10^6$ possible training pairs). \emph{Higher is better.} The horizontal axis shows the fraction of pairs used per epoch (log).}
    \label{fig:pair_subsampling_main}
    \vspace{-0.2cm}
\end{figure}

\textbf{Sensitivity to overlap.}
Figure~\ref{fig:overlap_autoc_main} studies the robustness to limited overlap by varying the propensity mechanism while keeping the remaining components of the synthetic data-generating process fixed (see Appendix~\ref{app:robustness_sensitivity} for details). As overlap decreases, ranking performance deteriorates for all methods, which is expected and reflects the increasing difficulty of causal inference when treated and control groups become less comparable. Importantly, across the full range of overlap levels considered, our \method consistently achieves the highest mean AUTOC. Additional sensitivity analyses, including robustness to the choice of $\kappa$, nuisance misspecification, and pseudo label clipping, are provided in Appendix~\ref{app:robustness_sensitivity}.

\begin{figure}[h]
    \centering
    \includegraphics[width=0.9\linewidth]{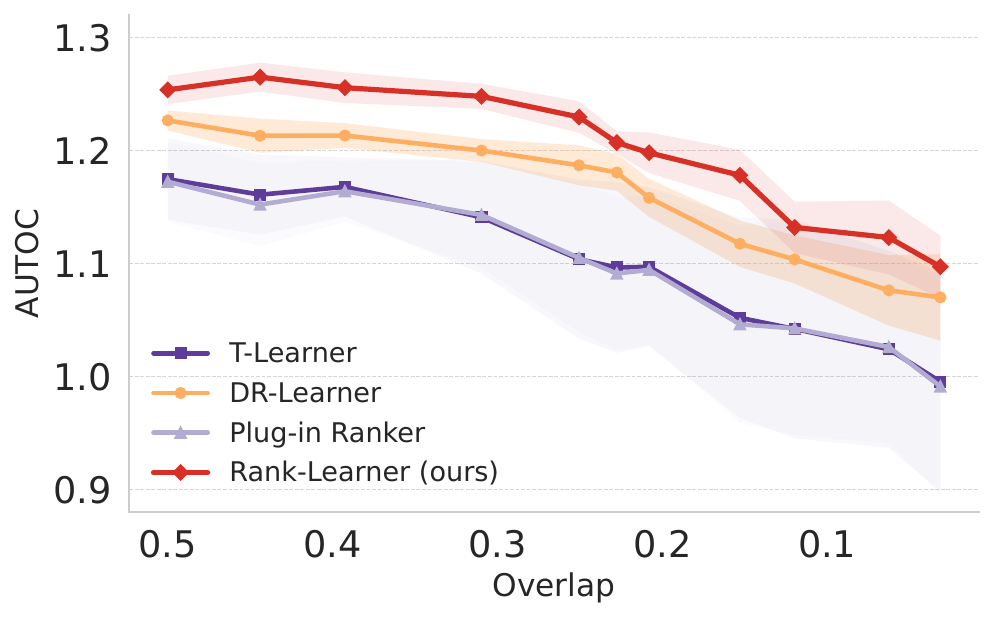}
    \caption{\textbf{Synthetic benchmark (overlap sensitivity).} Test AUTOC (mean $\pm$ s.e. over five seeds) as a function of overlap (decreasing left to right) for $n=500$.  \emph{Higher is better.} Overlap is varied by changing treatment assignment, remaining components of the synthetic data-generating process are kept fixed.}
    \label{fig:overlap_autoc_main}
    \vspace{-0.4cm}
\end{figure}


\subsection{Semi-synthetic benchmark results}
We next evaluate \method on the three semi-synthetic benchmarks with realistic covariate distributions (\textsc{MovieLens}, \textsc{MIMIC-III}, and \textsc{CPS}). This experiment tests whether the benefits of orthogonal ranking extend beyond synthetic data. Table~\ref{tab:semisynth_autoc} reports test AUTOC at a fixed training size of $n=1{,}000$. At this sample size, nuisance estimation error remains considerable, mirroring the difficulty of learning nuisance functions in observational settings. Across all datasets, \method achieves the strongest ranking performance among the considered baselines, with consistent improvements over the plug-in ranker and the pointwise CATE estimators.

\begin{table}[h]
\centering
\setlength{\tabcolsep}{4pt}
\footnotesize
\caption{\textbf{Semi-synthetic benchmarks}. Test AUTOC (mean $\pm$ std dev over five seeds) with training size $n=1{,}000$. \emph{Higher is better.} The oracle row reports AUTOC obtained by ranking the test set using the true treatment effects. Best mean is shown in \textbf{bold}.}
\label{tab:semisynth_autoc}
\begin{tabular}{lccc}
\toprule
Method & \textsc{MovieLens} & \textsc{MIMIC-III} & \textsc{CPS} \\
\midrule
oracle         & 1.39 & 1.22 & 1.01  \\
\midrule
T-learner      & 1.31 $\pm$ 0.03 & 1.12 $\pm$ 0.05 & 0.87 $\pm$ 0.08  \\
DR-learner     & 1.34 $\pm$ 0.02 & 1.17 $\pm$ 0.02 & 0.92 $\pm$ 0.02  \\
Plug-in ranker & 1.30 $\pm$ 0.03 & 1.11 $\pm$ 0.05 & 0.87 $\pm$ 0.08  \\
Rank-learner (\emph{ours})
              & \textbf{1.35 $\pm$ 0.01} & \textbf{1.18 $\pm$ 0.02} & \textbf{0.95 $\pm$ 0.01}  \\
\bottomrule
\end{tabular}
\vspace{-0.4cm}
\end{table}

\newpage
\subsection{Real-world uplift benchmark results}
We finally evaluate \method on the real-world \textsc{Criteo} uplift benchmark \cite{diemert:2021:largea}, which contains randomized data from online advertising campaigns. Each observation consists of user covariates, a treatment indicator corresponding to ad exposure, and a binary outcome indicating whether the user visited the advertiser's website.

To evaluate \method on \textsc{Criteo} in an observational setting, we follow \citet{diemert:2021:largea} and induce confounding in the training data by selectively sampling training instances, while leaving the test data randomized. We use sample splitting with fixed train and validation sizes of $10{,}000$ and $5{,}000$ samples, respectively. Since ground-truth treatment effects are unknown, we evaluate ranking quality using \emph{area under the uplift curve} (AUUC) on the test data. Due to the severe treatment imbalance and rare outcomes in this challenging benchmark, we report results on test sets of increasing size to reduce evaluation noise. Additional details on the experimental design, confounding mechanism, and evaluation protocol are provided in Appendix~\ref{app:criteo}.

Table~\ref{tab:realdata_auucmain} reports AUUC on the randomized test data. Across all test sizes, \method achieves the strongest mean AUUC among the considered baselines. Overall, these results further support the benefit of directly optimizing an orthogonal ranking objective in observational settings.

\begin{table}[h]
\small
\centering
\caption{\textbf{Criteo uplift benchmark.} AUUC on randomized test sets of increasing size (50k, 500k, and 1M). Results are reported as mean $\pm$ std dev over five seeds. Values are reported $\times 10^3$ for readability. \emph{Higher is better.} Best mean is shown in \textbf{bold}.}
\label{tab:realdata_auucmain}
\setlength{\tabcolsep}{4pt}
\begin{tabular}{lccc}
\toprule
Method & \multicolumn{3}{c}{AUUC $\times 10^3$} \\
\cmidrule(lr){2-4}
 & 50k & 500k & 1M \\
\midrule
T-learner
& 3.74 $\pm$ 1.18
& 5.09 $\pm$ 1.59
& 5.08 $\pm$ 1.62 \\

DR-learner
& 4.44 $\pm$ 1.12
& 5.01 $\pm$ 1.04
& 5.17 $\pm$ 1.13 \\

Plug-in ranker
& 3.78 $\pm$ 1.59
& 4.99 $\pm$ 1.69
& 5.04 $\pm$ 1.65 \\

Rank-learner (\emph{ours})
& \textbf{5.19 $\pm$ 1.87}
& \textbf{5.83 $\pm$ 0.57}
& \textbf{5.90 $\pm$ 0.40} \\
\bottomrule
\end{tabular}
\vspace{-0.15cm}
\end{table}

\section{Conclusion}
In this paper, we introduced \method, the first Neyman-orthogonal two-stage learner for ranking individuals by their treatment effects from observational data. Many downstream decision-making tasks only require recovering the relative ordering of treatment effects, rather than accurately estimating their exact magnitudes. To this end, we derived a novel orthogonal ranking objective that is robust to nuisance estimation errors. In addition, \method is model-agnostic and can be instantiated with arbitrary machine learning models. We demonstrated the strong ranking performance of \method across a broad range of experiments on synthetic, semi-synthetic, and real-world benchmarks.

\section*{Impact statement}
This paper presents methodological work aimed at advancing the field of machine learning and causal inference. While ranking individuals by treatment effects may inform decision-making in domains such as healthcare, marketing, or public policy, the proposed method is not tied to any specific application. As with all causal methods based on observational data, responsible use requires careful consideration of the underlying assumptions (e.g., unconfoundedness) and the context in which the estimates / rankings are applied.

\section*{Acknowledgments}
This paper is supported by the DAAD program Konrad Zuse Schools of Excellence in Artificial Intelligence, sponsored by the Federal Ministry of Research, Technology and Space. This research also received funding from the Research Foundation Flanders (FWO Vlaanderen) with grant number 11Q2C24N and from the Flemish government under the “Onderzoeksprogramma Artificiële Intelligentie (AI) Vlaanderen” program.

\bibliography{bibliography}

@article{aquino:2022:ethical,
  title = {Ethical Guidance for Hard Decisions: A Critical Review of Early International {{COVID-19 ICU}} Triage Guidelines},
  author = {Aquino, Yves Saint James and Rogers, Wendy and Scully, Jackie Leach and Magrabi, Farah and Carter, Stacy},
  year = 2022,
  journal = {Health Care Analysis},
  volume = {30},
  number = {2},
  pages = {163--195}
}

@article{card:2018:what,
  title = {What Works? {{A}} Meta Analysis of Recent Active Labor Market Program Evaluations},
  author = {Card, David and Kluve, Jochen and Weber, Andrea},
  year = 2018,
  journal = {Journal of the European Economic Association},
  volume = {16},
  number = {3},
  pages = {894--931}
}

@article{devriendt:2021:why,
  title = {Why You Should Stop Predicting Customer Churn and Start Using Uplift Models},
  author = {Devriendt, Floris and Berrevoets, Jeroen and Verbeke, Wouter},
  year = 2021,
  journal = {Information Sciences},
  volume = {548},
  pages = {497--515}
}

@article{gharibshah:2021:user,
  title = {User Response Prediction in Online Advertising},
  author = {Gharibshah, Zhabiz and Zhu, Xingquan},
  year = 2021,
  journal = {ACM Computing Surveys},
  volume = {54},
  number = {3},
  pages = {64:1--64:43}
}

@article{hines:2022:demystifying,
  title = {Demystifying Statistical Learning Based on Efficient Influence Functions},
  author = {Hines, Oliver and Dukes, Oliver and {Diaz-Ordaz}, Karla and Vansteelandt, Stijn},
  year = 2022,
  journal = {The American Statistician},
  volume = {76},
  number = {3},
  pages = {292--304},
  publisher = {ASA Website}
}

@inproceedings{kamran:2024:learning,
  title = {Learning to Rank for Optimal Treatment Allocation under Resource Constraints},
  booktitle = {International {{Conference}} on {{Artificial Intelligence}} and {{Statistics}} ({{AISTATS}})},
  author = {Kamran, Fahad and Makar, Maggie and Wiens, Jenna},
  year = 2024
}

@incollection{kennedy:2024:semiparametric,
  title = {Semiparametric Doubly Robust Targeted Double Machine Learning: {{A}} Review},
  booktitle = {Handbook of {{Statistical Methods}} for {{Precision Medicine}}},
  author = {Kennedy, Edward},
  year = 2024
}

@misc{morzywolek:2024:weighted,
  title = {On Weighted Orthogonal Learners for Heterogeneous Treatment Effects},
  author = {Morzywolek, Pawel and Decruyenaere, Johan and Vansteelandt, Stijn},
  year = 2024,
  number = {arXiv:2303.12687},
  publisher = {arXiv},
  archiveprefix = {arXiv}
}

@article{vinay:2021:ethics,
  title = {Ethics of {{ICU}} Triage during {{COVID-19}}},
  author = {Vinay, Rasita and Baumann, Holger and {Biller-Andorno}, Nikola},
  year = 2021,
  journal = {British Medical Bulletin},
  volume = {138},
  number = {1},
  pages = {5--15}
}

@article{harper:2015:movielens,
  title = {The {{MovieLens}} Datasets: History and Context},
  author = {Harper, Maxwell and Konstan, Joseph},
  year = 2015,
  journal = {ACM Transactions on Interactive Intelligent Systems},
  volume = {5},
  number = {4},
  pages = {19:1--19:19}
}

@misc{flood:2025:ipums,
  title = {{{IPUMS CPS}}: {{Version}} 13.0 [Dataset]},
  author = {Flood, Sarah and King, Miriam and Rodgers, Renae and Ruggles, Steven and Warren, Robert and Backman, Daniel and Breton, Etienne and Cooper, Grace and Rivera Drew, Julia and Richards, Stephanie and Van Riper, David and Williams, Kari},
  year = 2025,
  address = {Minneapolis, MN},
  url = {https://cps.ipums.org/cps/}
}

@article{johnson:2016:mimiciii,
  title = {{{MIMIC-III}}, a Freely Accessible Critical Care Database},
  author = {Johnson, Alistair and Pollard, Tom and Shen, Lu and Lehman, Li-Wei and Feng, Mengling and Ghassemi, Mohammad and Moody, Benjamin and Szolovits, Peter and Anthony Celi, Leo and Mark, Roger},
  year = 2016,
  journal = {Scientific Data},
  volume = {3},
  number = {1},
  pages = {160035},
  publisher = {Nature Publishing Group},
  copyright = {2016 The Author(s)}
}

@book{chernozhukov:2025:causal,
  title = {Causal Inference with {{ML}} and {{AI}}},
  author = {Chernozhukov, Victor and Hansen, Christian and Kallus, Nathan and Spindler, Martin and Syrgkanis, Vasilis},
  year = 2025,
  publisher = {Online}}

@article{kunzel:2019:metalearners,
  title = {Metalearners for Estimating Heterogeneous Treatment Effects Using Machine Learning},
  author = {K{\"u}nzel, S{\"o}ren and Sekhon, Jasjeet and Bickel, Peter and Yu, Bin},
  year = 2019,
  journal = {Proceedings of the National Academy of Sciences},
  volume = {116},
  number = {10},
  pages = {4156--4165},
  publisher = {Proceedings of the National Academy of Sciences}}

@article{kennedy:2023:optimal,
  title = {Towards Optimal Doubly Robust Estimation of Heterogeneous Causal Effects},
  author = {Kennedy, Edward},
  year = 2023,
  journal = {Electronic Journal of Statistics},
  volume = {17},
  number = {2},
  pages = {3008--3049},
  publisher = {{Institute of Mathematical Statistics and Bernoulli Society}}}

@article{nie:2021:quasioracle,
  title = {Quasi-Oracle Estimation of Heterogeneous Treatment Effects},
  author = {Nie, Xinkun and Wager, Stefan},
  year = 2021,
  journal = {Biometrika},
  volume = {108},
  number = {2},
  pages = {299--319}}

@article{wager:2018:estimation,
  title = {Estimation and Inference of Heterogeneous Treatment Effects Using Random Forests},
  author = {Wager, Stefan and Athey, Susan},
  year = 2018,
  journal = {Journal of the American Statistical Association},
  volume = {113},
  number = {523},
  pages = {1228--1242},
  publisher = {Taylor \& Francis}}

@misc{vanderschueren:2024:metalearners,
  title = {Metalearners for Ranking Treatment Effects},
  author = {Vanderschueren, Toon and Verbeke, Wouter and Moraes, Felipe and Proen{\c c}a, Hugo Manuel},
  year = 2024,
  number = {arXiv:2405.02183},
  primaryclass = {cs},
  publisher = {arXiv},
  archiveprefix = {arXiv}}

@inproceedings{shalit:2017:estimating,
  title = {Estimating Individual Treatment Effect: Generalization Bounds and Algorithms},
  booktitle = {International {{Conference}} on {{Machine Learning}} ({{ICML}})},
  author = {Shalit, Uri and Johansson, Fredrik and Sontag, David},
  year = 2017,
  pages = {3076--3085}}

@book{liu:2011:learning,
  title = {Learning to Rank for Information Retrieval},
  author = {Liu, Tie-Yan},
  year = 2011,
  publisher = {Springer},
  address = {Berlin, Heidelberg},
  copyright = {http://www.springer.com/tdm}}

@article{rubin:2005:causal,
  title = {Causal Inference Using Potential Outcomes: Design, Modeling, Decisions},
  author = {Rubin, Donald},
  year = 2005,
  journal = {Journal of the American Statistical Association},
  volume = {100},
  number = {469},
  pages = {322--331},
  publisher = {Taylor \& Francis}}

@article{foster:2023:orthogonal,
  title = {Orthogonal Statistical Learning},
  author = {Foster, Dylan and Syrgkanis, Vasilis},
  year = 2023,
  journal = {The Annals of Statistics},
  volume = {51},
  number = {3},
  pages = {879--908},
  publisher = {Institute of Mathematical Statistics}}

@article{chernozhukov:2018:double,
  title = {Double/Debiased Machine Learning for Treatment and Structural Parameters},
  author = {Chernozhukov, Victor and Chetverikov, Denis and Demirer, Mert and Duflo, Esther and Hansen, Christian and Newey, Whitney and Robins, James},
  year = 2018,
  journal = {The Econometrics Journal},
  volume = {21},
  number = {1},
  pages = {C1-C68}}

@article{yadlowsky:2025:evaluating,
  title = {Evaluating Treatment Prioritization Rules via Rank-Weighted Average Treatment Effects},
  author = {Yadlowsky, Steve and Fleming, Scott and Shah, Nigam and Brunskill, Emma and Wager, Stefan},
  year = 2025,
  journal = {Journal of the American Statistical Association},
  volume = {120},
  number = {549},
  pages = {38--51},
  publisher = {Taylor \& Francis}}

@inproceedings{curth:2021:nonparametric,
  title = {Nonparametric Estimation of Heterogeneous Treatment Effects: {{From}} Theory to Learning Algorithms},
  booktitle = {International {{Conference}} on {{Artificial Intelligence}} and {{Statistics}} ({{AISTATS}})},
  author = {Curth, Alicia and van der Schaar, Mihaela},
  year = 2021}

@inproceedings{kingma:2015:adam,
  title = {Adam: {{A}} Method for Stochastic Optimization},
  booktitle = {International {{Conference}} for {{Learning Representations}} ({{ICLR}})},
  author = {Kingma, Diederik and Ba, Jimmy},
  year = 2015}

@inproceedings{cao:2007:learning,
  title = {Learning to Rank: From Pairwise Approach to Listwise Approach},
  booktitle = {International {{Conference}} on {{Machine Learning}} ({{ICML}})},
  author = {Cao, Zhe and Qin, Tao and Liu, Tie-Yan and Tsai, Ming-Feng and Li, Hang},
  year = 2007}

@inproceedings{buyl:2023:rankformer,
  title = {{{RankFormer}}: Listwise Learning-to-Rank Using Listwide Labels},
  booktitle = {Conference on {{Knowledge Discovery}} and {{Data Mining}} ({{KDD}})},
  author = {Buyl, Maarten and Missault, Paul and Sondag, Pierre-Antoine},
  year = 2023}

@inproceedings{burges:2005:learning,
  title = {Learning to Rank Using Gradient Descent},
  booktitle = {International {{Conference}} on {{Machine Learning}} ({{ICML}})},
  author = {Burges, Chris and Shaked, Tal and Renshaw, Erin and Lazier, Ari and Deeds, Matt and Hamilton, Nicole and Hullender, Greg},
  year = 2005}

@book{imbens:2015:causal,
  title = {Causal Inference for Statistics, Social, and Biomedical Sciences: An Introduction},
  author = {Imbens, Guido and Rubin, Donald},
  year = 2015,
  publisher = {Cambridge University Press},
  address = {Cambridge}}

@article{rosenbaum:1983:central,
  title = {The Central Role of the Propensity Score in Observational Studies for Causal Effects},
  author = {Rosenbaum, Paul and Rubin, Donald},
  year = 1983,
  journal = {Biometrika},
  volume = {70},
  number = {1},
  pages = {41--55}}

@article{qian:2011:performance,
  title = {Performance Guarantees for Individualized Treatment Rules},
  author = {Qian, Min and Murphy, Susan},
  year = 2011,
  journal = {The Annals of Statistics},
  volume = {39},
  number = {2},
  pages = {1180--1210},
  publisher = {Institute of Mathematical Statistics}}

@article{athey:2021:policy,
  title = {Policy Learning with Observational Data},
  author = {Athey, Susan and Wager, Stefan},
  year = 2021,
  journal = {Econometrica},
  volume = {89},
  number = {1},
  pages = {133--161}}

@inproceedings{kallus:2018:balanced,
  title = {Balanced Policy Evaluation and Learning},
  booktitle = {Neural {{Information Processing Systems}} ({{NeurIPS}})},
  author = {Kallus, Nathan},
  year = 2018}

@article{kallus:2021:more,
  title = {More Efficient Policy Learning via Optimal Retargeting},
  author = {Kallus, Nathan},
  year = 2021,
  journal = {Journal of the American Statistical Association},
  volume = {116},
  number = {534},
  pages = {646--658},
  publisher = {Taylor \& Francis}}

@inproceedings{schweisthal:2023:reliable,
  title = {Reliable Off-Policy Learning for Dosage Combinations},
  booktitle = {Neural {{Information Processing Systems}} ({{NeurIPS}})},
  author = {Schweisthal, Jonas and Frauen, Dennis and Melnychuk, Valentyn and Feuerriegel, Stefan},
  year = 2023}

@inproceedings{frauen:2024:fair,
  title = {Fair Off-Policy Learning from Observational Data},
  booktitle = {International {{Conference}} on {{Machine Learning}} ({{ICML}})},
  author = {Frauen, Dennis and Melnychuk, Valentyn and Feuerriegel, Stefan},
  year = 2024}

@inproceedings{hess:2026:efficient,
  title = {Efficient and Sharp Off-Policy Learning under Unobserved Confounding},
  booktitle = {International {{Conference}} for {{Learning Representations}} ({{ICLR}})},
  author = {Hess, Konstantin and Frauen, Dennis and Melnychuk, Valentyn and Feuerriegel, Stefan},
  year = 2026
}

@inproceedings{frauen:2025:treatment,
  title = {Treatment Effect Estimation for Optimal Decision-Making},
  booktitle = {Neural {{Information Processing Systems}} ({{NeurIPS}})},
  author = {Frauen, Dennis and Melnychuk, Valentyn and Schweisthal, Jonas and van der Schaar, Mihaela and Feuerriegel, Stefan},
  year = 2025
}

@article{kraus:2024:datadriven,
  title = {Data-Driven Allocation of Preventive Care with Application to Diabetes Mellitus Type {{II}}},
  author = {Kraus, Mathias and Feuerriegel, Stefan and {Saar-Tsechansky}, Maytal},
  year = 2024,
  journal = {Manufacturing \& Service Operations Management},
  volume = {26},
  number = {1},
  pages = {137--153},
  publisher = {INFORMS}
}

@misc{diemert:2021:largea,
  title = {A Large Scale Benchmark for Individual Treatment Effect Prediction and Uplift Modeling},
  author = {Diemert, Eustache and Betlei, Artem and Renaudin, Christophe and Amini, Massih-Reza and Gregoir, Th{\'e}ophane and Rahier, Thibaud},
  year = 2021,
  number = {arXiv:2111.10106},
  publisher = {arXiv},
  archiveprefix = {arXiv}
}

@inproceedings{nagalapatti:2024:pairnet,
  title = {{{PairNet}}: Training with Observed Pairs to Estimate Individual Treatment Effect},
  booktitle = {International {{Conference}} on {{Machine Learning}} ({{ICML}})},
  author = {Nagalapatti, Lokesh and Singhal, Pranava and Ghosh, Avishek and Sarawagi, Sunita},
  year = 2024
}

@inproceedings{curth:2021:inductivea,
  title = {On Inductive Biases for Heterogeneous Treatment Effect Estimation},
  booktitle = {Neural {{Information Processing Systems}} ({{NeurIPS}})},
  author = {Curth, Alicia and {van der Schaar}, Mihaela},
  year = 2021
}

@article{olaya:2020:survey,
  title = {A Survey and Benchmarking Study of Multitreatment Uplift Modeling},
  author = {Olaya, Diego and Coussement, Kristof and Verbeke, Wouter},
  year = 2020,
  journal = {Data Mining and Knowledge Discovery},
  volume = {34},
  number = {2},
  pages = {273--308}
}

@article{verbeken:2025:uplift,
  title = {Uplift Model Evaluation with Ordinal Dominance Graphs},
  author = {Verbeken, Brecht and Guerry, Marie-Anne and Verbeke, Wouter and Verboven, Sam},
  year = 2025,
  journal = {Journal of Machine Learning Research},
  volume = {26},
  number = {90},
  pages = {1--56}
}

@inproceedings{frauen:2026:nonparametric,
  title = {Nonparametric {{LLM}} Evaluation from Preference Data},
  booktitle = {International {{Conference}} on {{Machine Learning}} ({{ICML}})},
  author = {Frauen, Dennis and Deviyani, Athiya and van der Schaar, Mihaela and Feuerriegel, Stefan},
  year = 2026
}

@article{hill:2011:bayesian,
  title = {Bayesian Nonparametric Modeling for Causal Inference},
  author = {Hill, Jennifer},
  year = 2011,
  journal = {Journal of Computational and Graphical Statistics},
  volume = {20},
  number = {1},
  pages = {217--240}
}

@article{dorie:2019:automated,
  title = {Automated versus Do-It-Yourself Methods for Causal Inference: Lessons Learned from a Data Analysis Competition},
  author = {Dorie, Vincent and Hill, Jennifer and Shalit, Uri and Scott, Marc and Cervone, Dan},
  year = 2019,
  journal = {Statistical Science},
  volume = {34},
  number = {1},
  pages = {43--68}
}
\bibliographystyle{icml2026}


\newpage
\appendix
\onecolumn

\section{Extended literature review}
\label{app:extended_review}

\subsection{Policy learning}

In policy learning, the goal is to learn a treatment assignment rule $\pi: \mathcal{X} \rightarrow \{0,1\}$ from observational data that maximizes the so-called \emph{policy value} $V(\pi)=\mathbb{E}[Y(\pi(X))]$, which represents the expected outcome in the population under policy $\pi(x)$. There has been extensive work on this topic \cite{qian:2011:performance}, including methods designed for limited overlap \cite{kallus:2021:more, kallus:2018:balanced}, unobserved confounding \cite{hess:2026:efficient}, doubly-robust methods \cite{athey:2021:policy}, extensions to continuous treatments \cite{schweisthal:2023:reliable}, fairness-constrained policies \cite{frauen:2024:fair}, and interpretable CATE-based policies \cite{frauen:2025:treatment}.

Many treatment assignment problems involve budget or capacity constraints, where only a subset of individuals can be treated. In such settings, learning a ranking of individuals according to their treatment effects is often more useful than learning a single binary treatment policy tailored to a fixed constraint. While policy learning methods \emph{can} optimize treatment rules for a specific budget or constrained policy class, a learned ranking under \method can instead be used downstream to instantiate different treatment policies for different treatment budgets, without having to retrain the model for each choice.

A different class of constraints arises when treatment assignment depends on whether treatment effects exceed a prespecified threshold. For such threshold-based treatment decisions, preserving only the relative ordering of treatment effects is not sufficient, since decisions additionally depend on magnitude information near the decision boundary. Recent work has therefore studied methods specifically designed for such threshold-based treatment decisions \citep[e.g.][]{frauen:2025:treatment}.

\subsection{Uplift modeling}

Our work is also closely related to uplift modeling \cite{diemert:2021:largea,devriendt:2021:why}, which studies how to model the incremental impact of an intervention on individual outcomes, often in applications such as marketing or recommender systems. Classical uplift modeling typically focuses on binary treatments and outcomes, where the goal is to identify responders and non-responders to treatment. More recent uplift modeling work has also considered settings with continuous outcomes and multi-treatment settings \cite{verbeken:2025:uplift,olaya:2020:survey}. In this sense, \method naturally connects to the broader uplift modeling literature, as illustrated by our experiments on the CRITEO uplift benchmark. Our contribution differs in that we directly study treatment effect ranking from observational data and propose the first Neyman-orthogonal two-stage learner for this task.

\newpage
\section{Orthogonal learning based on influence functions}
\label{app:background_orthogonal}
In this appendix, we provide a brief background on influence functions and how they can be used to construct orthogonal losses following \citet{kennedy:2024:semiparametric}. 

\textbf{Influence functions.}
We consider a setting where we observe data $W= (X, T, Y)$ distributed according to an unknown probability distribution $\mathbb{P}$ that lies in some statistical model $\mathcal{P}$ (a set of distributions). The goal is to estimate a statistical quantity of interest that can be expressed as a functional $\psi: \mathcal{P} \to \mathbb{R}$ (as an example, consider the average treatment effect $\psi(\mathbb{P}) = \mathbb{E}_{X} \,[\,\mathbb{E}[Y \mid T=1, X] - \mathbb{E}[Y \mid T=0, X]\,]\,$). Intuitively, the influence function $\mathbb{IF}_{\psi}(w, \mathbb{P})$ captures the sensitivity of the target functional to small perturbations of the distribution $\mathbb{P}$, obtained by adding an infinitesimal amount of probability mass on the observation $W=w$.

\textbf{Plug-in bias and the one-step correction.}
A natural way to estimate the functional of interest is to directly replace the true distribution $\mathbb{P}$ with a finite-sample estimate $\hat{\mathbb{P}}$ to obtain the \emph{plug-in} estimator $\psi(\hat{\mathbb{P}})$. However, since $\hat{\mathbb{P}}$ only approximates $\mathbb{P}$, plug-in estimators typically suffer from a first-order bias term that can be expressed through the influence function
\begin{equation}
    \psi(\hat{\mathbb{P}}) - \psi(\mathbb{P}) = -\int \mathbb{IF}_{\psi}(w, \hat{\mathbb{P}})\mathrm{d}\mathbb{P}(w) + R_2, 
\end{equation}
where $R_2$ collects higher-order remainder terms that are negligible under mild conditions. This implies that simply ``plugging-in'' an estimate of the distribution $\mathbb{P}$ into the functional $\psi$ yields a biased estimator. In order to correct this bias, the term involving the influence function can be estimated from the sample and added back to the plug-in estimator, resulting in the one-step corrected estimator
\begin{equation}
    \hat{\psi}^{\text{one-step}} = \psi(\hat{\mathbb{P}}) + \mathbb{P}_n\big[ \mathbb{IF}_{\psi}(W, \hat{\mathbb{P}})\big]. 
\end{equation}

\textbf{Orthogonal loss construction.}
In this work, the goal is \textit{not} to estimate a finite-dimensional target parameter such as the average treatment effect. Instead, we aim to learn a model $g(x)$ that preserves the ranking induced by the conditional average treatment effect $\tau(x)$, which is an infinite-dimensional quantity. Therefore, the one-step correction described above is not directly applicable to our setting. However, the population loss $\mathcal{L}(g, \eta)$ that is used to learn $g$ depends on unknown nuisance components $\eta = (\mu_1, \mu_0, e)$, which each depend on $\mathbb{P}$. In practice, these components must be replaced by estimates $\mathcal{L}(g, \hat\eta)$, which introduce the same kind of \emph{plug-in bias} as in the standard setting with a finite-dimensional target quantity. By applying the correction based on the influence function directly to the loss, we obtain an orthogonal loss whose gradients are first-order insensitive to estimation errors in the nuisances (i.e., Neyman-orthogonality; see Section~\ref{sec:orthogonalbasics} for a background).

\newpage
\section{Derivation and proof of the Neyman-orthogonal ranking loss}
\label{appendix:proof_orthogonality}

In Appendix~\ref{app:background_orthogonal}, we gave the intuition behind orthogonal learning based on influence functions and why it is desirable in our setting. In this appendix, we formally derive our proposed loss and prove that it is Neyman-orthogonal.

\subsection{Derivation of the efficient influence function}

\textbf{Strategy.} In order to derive the efficient influence function of our loss, we follow the strategy proposed by \citet{kennedy:2024:semiparametric} and \citet{hines:2022:demystifying}. We work with a nonparametric statistical model $\mathcal{P}$, defined as the set of all probability distributions for the observed data $W = (X,T,Y)$. Let $\mathbb{P} \in \mathcal{P}$ denote the true data-generating distribution of $W$ and consider the one-dimensional parametric submodel 
\begin{equation}
    \mathcal{P}_\epsilon = \big\{\mathbb{P}_\epsilon = (1-\epsilon)\mathbb{P} + \epsilon\delta_w \mid \epsilon \in [0,1) \big\},  \quad \quad \text{with} \quad \mathcal{P}_\epsilon \subseteq \mathcal{P},
\end{equation}
where $\delta_w$ denotes a point-mass distribution that assigns all its probability mass to the observation $W=w$. For this choice of submodel, the influence function of any functional $\psi(\mathbb{P})$ is given by the directional derivative 
\begin{equation}
\mathbb{IF}_{\psi}(w,\mathbb{P})
= \left.\frac{\mathrm{d}}{\mathrm{d}\epsilon}\,\psi(\mathbb{P}_\epsilon)\right|_{\epsilon=0}.
\end{equation}

This characterization allows us to derive the efficient influence function of our loss by differentiating $\psi(\mathbb{P}_\epsilon) = \mathcal{L}(g, \eta_\epsilon)$ with respect to~$\epsilon$. This can be carried out by using standard differentiation rules and by treating the expectations inside $\mathcal{L}(g, \eta_\epsilon)$ as finite sums, as if the data were discrete. For a comprehensive background and technical details, we refer to \cite{kennedy:2024:semiparametric} and  \cite{hines:2022:demystifying}.

\textbf{Derivation.} 
Recall that our goal is to learn a scoring function $g(x)$ that preserves the ordering induced by the conditional average treatment effect $\tau(x)$. To this end, we consider the pairwise population loss
\begin{align}
    \mathcal{L}^{\text{soft}}(g, \eta) &= \mathbb{E}_{X, X'}\Big[\ell\big(p_g(X,X'),\, t_{\tau}(X,X')\big)\Big]  \\[0.1cm] 
    &= \mathbb{E}_{X, X'}\Big[-t_{\tau}(X, X')\log p_g(X, X') - \big(1-t_{\tau}(X, X')\big) \log \big(1-p_g(X, X')\big)\Big], 
\end{align}
where $\ell(p,t) = -t\log p-(1-t)\log(1-p)$ denotes the binary cross-entropy loss. For brevity, we sometimes write $\ell(X,X')$ to denote $\ell\big(p_g(X,X'),\, t_{\tau}(X,X')\big)$, where we use
\begin{equation}
    p_g(X, X') = \sigma\big(g(X)-g(X')\big),  \quad \quad \text{and} \quad t_{\tau}(X, X') = \sigma\left(\frac{\tau(X)- \tau(X')}{\kappa}\right). 
\end{equation}
Here, $\kappa>0$ is a smoothness parameter, and the nuisance components $\eta$ enter the loss through $\tau(X) = \mu_1(X) - \mu_0(X)$.

The influence function of this loss evaluated at $w_0=(x_0, t_0, y_0)$ can be written based on the product rule as
\begin{equation}
    \mathbb{IF}_{\mathcal{L}^{\text{soft}}}(w_0, \mathbb{P}) = 
    \underbrace{\sum_x\sum_{x'} \mathbb{IF}_{\ell(x, x')}(w_0, \mathbb{P}) p(x)p(x')}_{A(w_0, \mathbb{P})} 
    + 
    \underbrace{\sum_x\sum_{x'}\ell(x, x')\mathbb{IF}_{p(x)p(x')}(w_0, \mathbb{P})}_{B(w_0, \mathbb{P})}. 
\end{equation}

Considering the $A(w_0, \mathbb{P})$ term, for fixed $(x,x')$, the dependence of $\ell(x,x')$ on $\mathbb{P}$ is only through $t_{\tau}(x,x')$, and by the chain rule we have
\begin{align}
    \mathbb{IF}_{\ell(x,x')}(w_0,\mathbb{P})
    &= \frac{\partial \ell(x,x')}{\partial t_{\tau}(x, x')}
    \;
    \frac{\partial t_{\tau}(x,x')}{\partial\big(\tau(x)-\tau(x')\big)}
    \;
    \mathbb{IF}_{\tau(x)-\tau(x')}(w_0,\mathbb{P})  \\[0.3cm] 
    &= \underbrace{\log\left(\frac{1-p_g(x, x')}{p_g(x, x')}\right) 
    \left(\frac{t_{\tau}(x, x')\big(1-t_{\tau}(x, x')\big)}{\kappa} \right)}_{C(x, x')}
    \;
    \Big(\mathbb{IF}_{\tau(x)}(w_0,\mathbb{P}) - \mathbb{IF}_{\tau(x')}(w_0,\mathbb{P})\Big)  \\[0.15cm]
    &= C(x, x') \,\Big(\mathbb{IF}_{\tau(x)}(w_0,\mathbb{P}) - \mathbb{IF}_{\tau(x')}(w_0,\mathbb{P})\Big).
\end{align}

To evaluate the term $\mathbb{IF}_{\tau(x)}(w_0,\mathbb{P})$, we use the known influence function of the CATE at a fixed covariate value $x$, given by
\begin{align}
\label{eq:drscore}
    \mathbb{IF}_{\tau(x)}(w_0,\mathbb{P}) &= \frac{\mathbf{I}\{x_0=x\}}{p(x)}\left(\frac{t_0}{e(x)}\big(y_0 - \mu_1(x)\big) - \frac{1-t_0}{1-e(x)}\big(y_0 - \mu_0(x)\big) \right)  \\[0.2cm]
    &= \frac{\mathbf{I}\{x_0=x\}}{p(x)}\big(\phi_{\eta}(w_0, x) - \tau(x) \big), 
\end{align}
where the last line is obtained by adding and subtracting $\tau(x)$, and denoting $\phi_{\eta}(w_0, x)$ the doubly robust score for $\tau(x)$, which depends on the nuisances $\eta=(\mu_1, \mu_0, e)$. Plugging this into the expression for $A(w_0, \mathbb{P})$, we get
\begin{align}
    A(w_0, \mathbb{P}) &= \sum_x\sum_{x'}C(x, x')
    \,
    \mathbf{I}\{x_0=x\}\big(\phi_{\eta}(w_0, x) - \tau(x) \big)
    \,
    p(x')  \nonumber \\
    &- 
    \sum_x\sum_{x'}C(x, x')
    \,
    \mathbf{I}\{x_0=x'\}\big(\phi_{\eta}(w_0, x') - \tau(x') \big)
    \,
    p(x) \\
    &= \sum_{x'}C(x_0, x')
    \,
    \big(\phi_{\eta}(w_0, x_0) - \tau(x_0) \big)
    \,
    p(x')  -
    \sum_xC(x, x_0)
    \,
    \big(\phi_{\eta}(w_0, x_0) - \tau(x_0) \big)
    \,
    p(x)  \\
    &= \mathbb{E}_{X'}\big[ C(x_0, X')\big]\big(\phi_{\eta}(w_0) - \tau(x_0) \big) - \mathbb{E}_{X}\big[ C(X, x_0)\big]\big(\phi_{\eta}(w_0) - \tau(x_0) \big), 
\end{align}
 where for brevity we write $\phi_\eta(w_0)$ rather than $\phi_\eta(w_0,x_0)$, since $w_0=(x_0,t_0,y_0)$ already contains the covariates $x_0$ on which the score depends.

Now consider the $B(w_0, \mathbb{P})$ term, where, by the product rule, we have
\begin{align}
    B(w_0, \mathbb{P}) &= \sum_x\sum_{x'}\ell(x, x')\mathbb{IF}_{p(x)p(x')}(w_0, \mathbb{P})  \\
    &= \sum_x\sum_{x'} \ell(x, x') \, \mathbb{IF}_{p(x)}(w_0, \mathbb{P}) \, p(x')
    +
    \sum_x\sum_{x'} \ell(x, x') \, p(x) \, \mathbb{IF}_{p(x')}(w_0, \mathbb{P}). 
\end{align}

To evaluate the term $\mathbb{IF}_{p(x)}(w_0, \mathbb{P})$, we use the known influence function of a marginal distribution at a fixed covariate value $x$, given by
\begin{equation}
    \mathbb{IF}_{p(x)}(w_0, \mathbb{P}) = \mathbf{I}\{x_0=x\} - p(x). 
\end{equation}

Plugging this into the equation for $B(w_0, \mathbb{P})$ gives
\begin{align}
    B(w_0, \mathbb{P}) &= \sum_x\sum_{x'} \ell(x, x') \, \mathbf{I}\{x_0=x\} \, p(x') - \sum_x\sum_{x'} \ell(x, x') \, p(x)p(x') 
    \\ 
    &+
    \sum_x\sum_{x'} \ell(x, x') \, \mathbf{I}\{x_0=x'\} \, p(x) - \sum_x\sum_{x'} \ell(x, x') \, p(x)p(x') \nonumber \\[0.1cm]
    &= \mathbb{E}_{X'}\big[\ell(x_0, X')\big] - \mathbb{E}_{X, X'}\big[\ell(X, X')\big] + \mathbb{E}_{X}\big[\ell(X, x_0)\big] - \mathbb{E}_{X, X'}\big[\ell(X, X')\big]. 
\end{align}

Putting everything together, the efficient influence function of the population loss $\mathcal{L}^{\text{soft}}$ becomes
\begin{align}
    \mathbb{IF}_{\mathcal{L}^{\text{soft}}}(w_0, \mathbb{P}) 
    &= A(w_0, \mathbb{P}) + B(w_0, \mathbb{P})  \\[0.15cm]
    &= \mathbb{E}_{X'}\big[ C(x_0, X')\big]\big(\phi_{\eta}(w_0) - \tau(x_0) \big) - \mathbb{E}_{X}\big[ C(X, x_0)\big]\big(\phi_{\eta}(w_0) - \tau(x_0) \big)  \\[0.1cm]
    &+ \mathbb{E}_{X'}\big[\ell(x_0, X')\big] - \mathbb{E}_{X, X'}\big[\ell(X, X')\big] + \mathbb{E}_{X}\big[\ell(X, x_0)\big] - \mathbb{E}_{X, X'}\big[\ell(X, X')\big], \nonumber
\end{align}
with 
\begin{equation}
    C(x, x') = \log\left(\frac{1-p_g(x, x')}{p_g(x, x')}\right)  
    \left(\frac{t_{\tau}(x, x')\big(1-t_{\tau}(x, x')\big)}{\kappa} \right). 
\end{equation}

\subsection{One-step correction}

As explained in Appendix~\ref{app:background_orthogonal}, the one-step correction removes the first-order bias from the plug-in estimator by adding the empirical mean of the efficient influence function evaluated at the empirical distribution $\hat{\mathbb{P}}$. In our setting, the resulting estimator becomes
\begin{align}
    \mathcal{L}^{\text{orth}}(g, \hat{\eta}) &= \mathcal{L}^{\text{soft}}(g, \hat{\eta}) + \mathbb{P}_n\left[ \mathbb{IF}_{\mathcal{L}^{\text{soft}}}(W, \hat{\mathbb{P}})\right]  \\[0.1cm]
    &= \mathcal{L}^{\text{soft}}(g, \hat{\eta}) +\frac{1}{n}\sum_{i=1}^{n}  \mathbb{IF}_{\mathcal{L}^{\text{soft}}}(w_i, \hat{\mathbb{P}}).
\end{align}

Since the influence function is evaluated at the empirical distribution $\hat{\mathbb{P}}$, all expectations are taken under $\hat{\mathbb{P}}$. Concretely, for any fixed $x_i$, we have
\begin{equation}
    \hat{\mathbb{E}}_{X'}\big[C(x_i, X')\big] = \frac{1}{n}\sum_{j=1}^{n} C(x_i, x_j), 
\end{equation}
and, for the double expectation, we have
\begin{equation}
    \hat{\mathbb{E}}_{X, X'}\big[\ell(X, X')\big] =
    \frac{1}{n^2}\sum_{j=1}^{n}\sum_{k=1}^{n} \ell(x_j, x_k). 
\end{equation}

We can now show that the empirical mean of the remainder term $B(W,\hat{\mathbb{P}})$ vanishes. Evaluating all expectations under the empirical distribution $\hat{\mathbb{P}}$, we have
\begin{align}
    \mathbb{P}_n\big[B(W,\hat{\mathbb{P}})\big]
    &= \frac{1}{n}\sum_{i=1}^n \Big\{
        \hat{\mathbb{E}}_{X'}\big[\ell(x_i, X')\big]
        - \hat{\mathbb{E}}_{X,X'}\big[\ell(X, X')\big]
        + \hat{\mathbb{E}}_{X}\big[\ell(X, x_i)\big]
        - \hat{\mathbb{E}}_{X,X'}\big[\ell(X, X')\big]
      \Big\}  \\
    &= \frac{1}{n}\sum_{i=1}^n \Big\{
        \frac{1}{n}\sum_{j=1}^n \ell(x_i, x_j)
        - \frac{1}{n^2}\sum_{j=1}^n\sum_{k=1}^n \ell(x_j, x_k)
        + \frac{1}{n}\sum_{j=1}^n \ell(x_j, x_i)
        - \frac{1}{n^2}\sum_{j=1}^n\sum_{k=1}^n \ell(x_j, x_k)
      \Big\} \\ &= 0, 
\end{align}
where the last equality follows by relabeling indices in the double sums.

Now consider the empirical mean of the $A(W,\hat{\mathbb{P}})$ term:
\begin{align}
    \mathbb{P}_n\left[ \mathbb{IF}_{\mathcal{L}^{\text{soft}}}(W, \hat{\mathbb{P}})\right]  &= \mathbb{P}_n\left[ A(W,\hat{\mathbb{P}})\right]  \\  
    &= \frac{1}{n}\sum_{i=1}^n \hat{\mathbb{E}}_{X'}\big[C(x_i, X')\big]\big(\phi_{\hat\eta}(w_i) -\hat\tau(x_i)\big) - \frac{1}{n}\sum_{i=1}^n \hat{\mathbb{E}}_{X}\big[C(X, x_i)\big]\big(\phi_{\hat\eta}(w_i) - \hat\tau(x_i)\big)  \\
    &= \frac{1}{n^2} \sum_{i=1}^n\sum_{j=1}^n C(x_i, x_j) \Big(
        \big(\phi_{\hat\eta}(w_i) - \hat\tau(x_i)\big)
        - \big(\phi_{\hat\eta}(w_j) - \hat\tau(x_j)\big) \Big),  
\end{align}
where the last equality again follows by relabeling indices. Combining the plug-in loss with the one-step correction gives the empirical loss
\begin{align}
    \mathcal{L}^{\text{orth}}(g, \hat{\eta})
    &= \mathcal{L}^{\text{soft}}(g, \hat{\eta})
       + \mathbb{P}_n\left[ \mathbb{IF}_{\mathcal{L}^{\text{soft}}}(W, \hat{\mathbb{P}})\right]  \\
    &=  \frac{1}{n^2}\sum_{i=1}^n\sum_{j=1}^n
    \Big\{-t_{\hat\tau}(x_i,x_j) \log p_g(x_i,x_j)- \big(1 - t_{\hat\tau}(x_i,x_j)\big)\log\big(1-p_g(x_i,x_j)\big) \\
    &\hspace{2.2cm} + C(x_i, x_j) \Big(
        \big(\phi_{\hat\eta}(w_i) - \hat\tau(x_i)\big)
        - \big(\phi_{\hat\eta}(w_j) - \hat\tau(x_j)\big) \Big) \Big\}. 
\end{align}

Recall that the correction weight satisfies
\begin{equation}
    C(x_i, x_j) = \log\!\left(\frac{1-p_g(x_i,x_j)}{p_g(x_i,x_j)}\right)
       \left(\frac{t_{\hat\tau}(x_i, x_j)\big(1-t_{\hat\tau}(x_i, x_j)\big)}{\kappa} \right), 
\end{equation}
where the first factor is exactly the derivative of the binary cross-entropy loss with respect to its label
\begin{equation}
    \frac{\partial}{\partial t}
      \Big(-t\log p - (1-t)\log(1-p)\Big) = \log\!\left(\frac{1-p}{p}\right).
\end{equation}
Since binary cross-entropy is affine in the label $t$, multiplying this derivative by any quantity can be absorbed into the label itself. Therefore, the entire correction is equivalent to using standard binary cross-entropy loss with pseudo labels
\begin{equation}
    \tilde t_{\hat\eta}(w_i,w_j)
    = t_{\hat\tau}(x_i,x_j)
      + \frac{t_{\hat\tau}(x_i, x_j)\big(1-t_{\hat\tau}(x_i, x_j)\big)}{\kappa}
        \Big(
            \big(\phi_{\hat\eta}(w_i)-\hat\tau(x_i)\big)
            - \big(\phi_{\hat\eta}(w_j)-\hat\tau(x_j)\big)
        \Big),
\end{equation}
so that
\begin{equation}
    \mathcal{L}^{\text{orth}}(g,\hat\eta)
    = \frac{1}{n^2}\sum_{i=1}^n\sum_{j=1}^n
      \Big(
        -\tilde t_{\hat\eta}(w_i,w_j)\log p_g(x_i,x_j)
        - \big(1-\tilde t_{\hat\eta}(w_i,w_j)\big)\log\big(1-p_g(x_i,x_j)\big)
      \Big).
\end{equation}

\subsection{Population loss and pseudo labels}
At the population level, the derived loss can be written as binary cross-entropy between the model predictions and the pseudo labels
\begin{equation}
    \mathcal{L}^{\text{orth}}(g,\eta)
    = \mathbb{E}_{W, W'}\Big[
        -\tilde t_\eta(W,W')\log p_g(X,X')
        - \big(1-\tilde t_\eta(W,W')\big)\log\big(1-p_g(X,X')\big)
      \Big],
\end{equation}
with
\begin{equation}
\label{eq:pseudo_labels}
    \tilde t_{\eta}(W,W')
    = t_{\tau}(X,X')
      + \frac{t_{\tau}(X, X')\big(1-t_{\tau}(X, X')\big)}{\kappa}
        \Big(
            \big(\phi_{\eta}(W)-\tau(X)\big)
            - \big(\phi_{\eta}(W')-\tau(X')\big)
        \Big).
\end{equation}

\subsection{Proof of Neyman-orthogonality}
\label{appendix:neyman_orthogonality}

\begin{proof}[Proof of Theorem~\ref{thm:orthogonality}]
Let $\eta^0 =(\mu_1^0, \mu_0^0, e^0)$ denote the true nuisance components, and let $g^0$ be a population minimizer of the loss $\mathcal{L}^{\text{orth}}(g, \eta^0)$. Neyman-orthogonality requires that for any perturbation direction $\Delta g$ and $\Delta \eta$, 
\begin{equation}
    D_{\eta}D_{g}\,\mathcal{L}^{\text{orth}}(g^0,\eta^0)[\Delta g,\,\Delta\eta] = 0. 
\end{equation}
We verify this condition by explicitly computing the cross-derivative evaluated at $(g^0,\eta^0)$ and showing that it equals zero for each nuisance component, following the approach of \citet{morzywolek:2024:weighted}.

\textbf{Directional derivative with respect to $g$.}
We first write the loss in binary cross-entropy form
\begin{align}
    \mathcal{L}^{\text{orth}}(g,\eta)
     &= \mathbb{E}_{W, W'}\big[\ell_{g, \eta}(W, W')\big] \\
     &= \mathbb{E}_{W, W'}\Big[
        -\tilde t_\eta(W,W')\log p_g(X,X')
        - \big(1-\tilde t_\eta(W,W')\big)\log\big(1-p_g(X,X')\big)
      \Big], 
\end{align}
where $p_g(X, X') = \sigma\big(g(X) - g(X')\big)$ and $\tilde t_\eta(W,W')$ are the pseudo labels defined in Eq.~(\ref{eq:pseudo_labels}).

To compute the directional derivative with respect to $g$, we consider the perturbation path 
\begin{equation}
g_t(x) = g^0(x) + t \Delta g(x), \quad \quad \text{with} \quad \Delta g(x) = g(x) - g^0(x),
\end{equation}
such that, under mild regularity conditions,
\begin{equation}
    D_g \mathcal{L}^{\text{orth}}(g^0,\eta)[\Delta g] = \left. \frac{\mathrm{d}}{\mathrm{d}t} \mathcal{L}^{\text{orth}}(g_t,\eta) \right|_{t=0} = \mathbb{E}_{W, W'}\left[
        \left.\frac{\mathrm{d}}{\mathrm{d}t}\,\ell_{g_t,\eta}(W,W')\right|_{t=0}\;
      \right]. 
\end{equation}

To evaluate the inner derivative, we apply the chain rule
\begin{align}
    \left.\frac{\mathrm{d}}{\mathrm{d}t}\,\ell_{g_t,\eta}(W,W')\right|_{t=0}
    &=
    \left.\frac{\partial \ell_{g_t,\eta}(W,W')}{\partial p_{g_t}(X, X')}\right|
    _{t=0}
    \,
    \left.\frac{\partial p_{g_t}(X, X')}{\partial \big(g_t(X)-g_t(X')\big)}\right|_{t=0}
    \,
    \left.\frac{\partial \big(g_t(X)-g_t(X')\big)}{\partial t}\right|_{t=0}  \\[0.25cm]
    &=      
    \frac{p_{g^0}(X, X') - \tilde t_{\eta}(W,W')}{p_{g^0}(X, X')\big(1-p_{g^0}(X, X')\big)} 
    \;\;
    p_{g^0}(X, X')\big(1-p_{g^0}(X, X')\big) 
    \,
    \big( \Delta g(X) - \Delta g(X')\big)  \\[0.25cm]
    &= \big(p_{g^0}(X, X') - \tilde t_{\eta}(W,W')\big)\big( \Delta g(X) - \Delta g(X')\big), 
\end{align}
which leads to the result
\begin{equation}
    D_g \mathcal{L}^{\text{orth}}(g^0,\eta)[\Delta g] = \mathbb{E}\Big[\big(p_{g^0}(X, X') - \tilde t_{\eta}(W,W')\big)\big( \Delta g(X) - \Delta g(X')\big)\Big]. 
\end{equation}

\textbf{Orthogonality with respect to $\mu_1$.}
We now consider perturbations of the nuisance component $\mu_1$ along the path
\begin{equation}
    \mu_{1,s}(x) = \mu_1^0(x) + s\,\Delta\mu_1(x), 
    \quad \quad \text{with} \quad \Delta\mu_1(x) = \mu_1(x) - \mu_1^0(x),
\end{equation}
and write $\eta_s = (\mu_{1,s},\mu_0^0,e^0)$ such that the directional derivative of interest becomes
\begin{equation}
    D_{\mu_1}D_g\,\mathcal{L}^{\text{orth}}(g^0,\eta^0)[\Delta g,\Delta\mu_1]
    = \left.\frac{\mathrm{d}}{\mathrm{d}s}\, D_g \mathcal{L}^{\text{orth}}(g^0,\eta_s)[\Delta g]\right|_{s=0} \;,
\end{equation}
and, under mild regularity conditions,
\begin{align}
    D_{\mu_1}D_g\,\mathcal{L}^{\text{orth}}(g^0,\eta^0)[\Delta g,\Delta\mu_1]
    &= \left.\frac{\mathrm{d}}{\mathrm{d}s}\, \mathbb{E}\Big[\big(p_{g^0}(X, X') - \tilde t_{\eta_s}(W,W')\big)\big( \Delta g(X) - \Delta g(X')\big)\Big]\right|_{s=0}  \\ 
    \label{eq:derictionaldermu1}
    &= -\mathbb{E}\left[\left.\frac{\mathrm{d}}{\mathrm{d}s}\, \tilde t_{\eta_s}(W,W') \right|_{s=0} \,  \big( \Delta g(X) - \Delta g(X')\big) \right]. 
\end{align}

In order to evaluate the inner derivative, we substitute the definition of $\tilde t_{\eta_s}(W,W')$ from Equation~(\ref{eq:pseudo_labels}), which yields
\begin{align}
    \left.\frac{\mathrm{d}}{\mathrm{d}s}\, \tilde t_{\eta_s}(W,W') \right|_{s=0} &= 
    \left.\frac{\mathrm{d}}{\mathrm{d}s}\, t_{\tau_s}(X,X') \right|_{s=0} \nonumber \\[0cm]
    &+ 
    \left.\frac{\mathrm{d}}{\mathrm{d}s}\, \underbrace{\frac{t_{\tau_s}(X,X')\big(1-t_{\tau_s}(X,X')\big)}{\kappa}}_{A(s)} \;\;
        \underbrace{\Big(
            \big(\phi_{\eta_s}(W)-\tau_s(X)\big)
            - \big(\phi_{\eta_s}(W')-\tau_s(X')\big)
        \Big)}_{B(s)}   \;\right|_{s=0}  .
\end{align}

For the first term, recall that 
\begin{equation}
    t_{\tau_s}(X,X') = \sigma \left(\frac{\tau_s(X) - \tau_s(X')}{\kappa}\right), \quad \quad \text{with} \quad \tau_s(X)=\mu_{1,s}(X)-\mu_0^0(X) .
\end{equation}
By applying the chain rule, we have
\begin{equation}
\label{eq:softlabelder}
    \left.\frac{\mathrm{d}}{\mathrm{d}s}\, t_{\tau_s}(X,X') \right|_{s=0}  = \frac{t_{\tau^0}(X,X') \big(1-t_{\tau^0}(X,X')\big)}{\kappa} \big( \Delta \mu_1(X) - \Delta \mu_1(X')\big) .
\end{equation}

For the second term, we apply the product rule, and for brevity write $\left.\frac{\mathrm{d}}{\mathrm{d}s} A(s) \right|_{s=0} = A'(0)$, such that
\begin{align}
\left.\frac{\mathrm{d}}{\mathrm{d}s} A(s) B(s) \right|_{s=0} &=  A'(0) B(0) + A(0) B'(0)  \\[0.15cm]
&= A'(0) \Big(
            \big(\phi_{\eta^0}(W)-\tau^0(X)\big)
            - \big(\phi_{\eta^0}(W')-\tau^0(X')\big)
        \Big)  \nonumber \\[0.15cm]
&+ A(0) \left.\frac{\mathrm{d}}{\mathrm{d}s} \Big(
            \big(\phi_{\eta_s}(W)-\tau_s(X)\big)
            - \big(\phi_{\eta_s}(W')-\tau_s(X')\big)
        \Big) \right|_{s=0}  .
\end{align}

Now, we can substitute the definition of $\phi_{\eta_s}(W)$ from Equation~\eqref{eq:drscore} to see that
\begin{align}
    &\left.\frac{\mathrm{d}}{\mathrm{d}s}
            \big(\phi_{\eta_s}(W)-\tau_s(X)\big) \right|_{s=0} = \left.\frac{\mathrm{d}}{\mathrm{d}s}
            \phi_{\eta_s}(W)\right|_{s=0} - \left.\frac{\mathrm{d}}{\mathrm{d}s}
            \tau_s(X) \right|_{s=0}  \\[0.25cm]
            &= \left.\frac{\mathrm{d}}{\mathrm{d}s} \left( \mu_{1,s}(X) - \mu_0^0(X) + \frac{T}{e^0(X)}\big(Y - \mu_{1,s}(X)\big)  - \frac{1-T}{1-e^0(X)}\big(Y - \mu_{0}^0(X)\big) \right) \right|_{s=0} - \Delta \mu_1(X)  \\[0.15cm]
            &= -\frac{T}{e^0(X)} \Delta \mu_1(X) .
\end{align}
Therefore, the entire second term becomes
\begin{align}
\label{eq:correctionterm}
\left.\frac{\mathrm{d}}{\mathrm{d}s} A(s) B(s) \right|_{s=0} &= A'(0) \Big(
            \big(\phi_{\eta^0}(W)-\tau^0(X)\big)
            - \big(\phi_{\eta^0}(W')-\tau^0(X')\big)
        \Big) \nonumber \\[0cm]
&+ \frac{t_{\tau^0}(X,X')\big(1-t_{\tau^0}(X,X')\big)}{\kappa}  \left( \frac{T'}{e^0(X')} \Delta \mu_1(X') - \frac{T}{e^0(X)} \Delta \mu_1(X) \right).  
\end{align}

Given the expressions for the derivatives of the soft label from Eq.~\eqref{eq:softlabelder} and the correction term from Eq.~\eqref{eq:correctionterm}, we obtain
\begin{align}
    \left.\frac{\mathrm{d}}{\mathrm{d}s}\, \tilde t_{\eta_s}(W,W') \right|_{s=0} &=  A'(0) \; \Big(
            \big(\phi_{\eta^0}(W)-\tau^0(X)\big)
            - \big(\phi_{\eta^0}(W')-\tau^0(X')\big)
            \Big) \nonumber \\[0.15cm]
    &+
    \frac{t_{\tau^0}(X,X')\big(1-t_{\tau^0}(X,X')\big)}{\kappa}  \left( \Big(1-\frac{T}{e^0(X)}\Big) \Delta \mu_1(X) - \Big(1-\frac{T'}{e^0(X')}\Big) \Delta \mu_1(X') \right).
\end{align}

Finally, we can consider the directional derivative of interest in Eq.~\eqref{eq:derictionaldermu1}, and apply iterated expectations to yield
\begin{align}
    D_{\mu_1}D_g\,\mathcal{L}^{\text{orth}}(g^0,\eta^0)[\Delta g,\Delta\mu_1]
    &= -\mathbb{E}_{W, W'}\left[\left.\frac{\mathrm{d}}{\mathrm{d}s}\, \tilde t_{\eta_s}(W,W') \right|_{s=0} \,  \big( \Delta g(X) - \Delta g(X')\big) \right]  \\[0.1cm]
    &= -\mathbb{E}_{X, X'}\left[\mathbb{E}_{T,T',Y,Y' \mid X,X'}\left[\left.\frac{\mathrm{d}}{\mathrm{d}s}\, \tilde t_{\eta_s}(W,W') \right|_{s=0} \right] \big( \Delta g(X) - \Delta g(X')\big)\right]  \\
    &= 0,
\end{align}
where the last equality follows from
\begin{equation}
    \mathbb{E}_{T,Y \mid X}\left[ 1-\frac{T}{e^0(X)}  \right] = 0, \quad \quad  \text{and} \quad \mathbb{E}_{T,Y \mid X}\Big[ \phi_{\eta^0}(W)-\tau^0(X) \Big] = 0,
\end{equation}
and likewise for the primed counterparts.

This shows that for any perturbation directions $\Delta g$ and $\Delta\mu_1$,
\begin{equation}
    D_{\mu_1}D_g\,\mathcal{L}^{\text{orth}}(g^0,\eta^0)[\Delta g,\Delta\mu_1] = 0.
\end{equation}
and therefore establishes that the loss $\mathcal{L}^{\text{orth}}$ is Neyman-orthogonal with respect to the nuisance component $\mu_1$.

\textbf{Orthogonality with respect to $\mu_0$.}
For the nuisance component $\mu_0$, the proof is analogous to the case of $\mu_1$ with some minor differences. We consider perturbations along the path
\begin{equation}
    \mu_{0,s}(x) = \mu_0^0(x) + s\,\Delta\mu_0(x), 
    \quad \quad \text{with} \quad \Delta\mu_0(x) = \mu_0(x) - \mu_0^0(x),
\end{equation}
and write $\eta_s = (\mu_1^0,\mu_{0,s},e^0)$. The directional derivative of interest is then
\begin{align}
        D_{\mu_0}D_g\,\mathcal{L}^{\text{orth}}(g^0,\eta^0)[\Delta g,\Delta\mu_0]
    &= \left.\frac{\mathrm{d}}{\mathrm{d}s}\,D_g\mathcal{L}^{\text{orth}}(g^0,\eta_s)[\Delta g]\right|_{s=0}  \\[0.15cm]
    &= -\mathbb{E}\left[\left.\frac{\mathrm{d}}{\mathrm{d}s}\, \tilde t_{\eta_s}(W,W') \right|_{s=0} \,  \big( \Delta g(X) - \Delta g(X')\big) \right]. 
\end{align}

By the same steps as in the case of $\mu_1$, but using that
$\tau_s(x)=\mu_1^0(x)-\mu_{0,s}(x)$, we get
\begin{align}
    \left.\frac{\mathrm{d}}{\mathrm{d}s}\, \tilde t_{\eta_s}(W,W') \right|_{s=0} &= \frac{t_{\tau^0}(X,X')\big(1-t_{\tau^0}(X,X')\big)}{\kappa}  \left( \Big(\frac{1-T}{1-e^0(X)}-1\Big) \Delta \mu_0(X) - \Big(\frac{1-T'}{1-e^0(X')}-1\Big) \Delta \mu_0(X') \right) \nonumber \\[0.2cm]
    &+ \left.\frac{\mathrm{d}}{\mathrm{d}s}\left( \frac{t_{\tau_s}(X,X')\big(1-t_{\tau_s}(X,X')\big)}{\kappa}\right)\right|_{s=0} \Big(
            \big(\phi_{\eta^0}(W)-\tau^0(X)\big)
            - \big(\phi_{\eta^0}(W')-\tau^0(X')\big)
        \Big) .
\end{align}

We can then express the directional derivative using iterated expectations as
\begin{align}
    D_{\mu_0}D_g\,\mathcal{L}^{\text{orth}}(g^0,\eta^0)[\Delta g,\Delta\mu_0]
    &= -\mathbb{E}_{X, X'}\left[\mathbb{E}_{T,T',Y,Y' \mid X,X'}\left[\left.\frac{\mathrm{d}}{\mathrm{d}s}\, \tilde t_{\eta_s}(W,W') \right|_{s=0} \right] \big( \Delta g(X) - \Delta g(X')\big)\right]  \\
    &= 0,
\end{align}
since
\begin{equation}
    \mathbb{E}_{T,Y \mid X}\left[ \frac{1-T}{1-e^0(X)} -1 \right] = 0, \quad \quad  \text{and} \quad \mathbb{E}_{T,Y \mid X}\Big[ \phi_{\eta^0}(W)-\tau^0(X) \Big] = 0, 
\end{equation}
and likewise for the primed counterparts. This derivation establishes orthogonality of $\mathcal{L}^{\text{orth}}$ with respect to the nuisance component $\mu_0$.

\textbf{Orthogonality with respect to $e$.}

For the propensity score $e$, the orthogonality argument follows the same overall strategy as for $\mu_1$ and $\mu_0$, but the dependence of $\tilde t_\eta$ on $e$ enters only through the doubly robust score $\phi_\eta$ (Eq.~\eqref{eq:drscore}). We consider perturbations of $e$ along the path
\begin{equation}
    e_s(x) = e^0(x) + s\,\Delta e(x),
    \quad\text{with}\quad
    \Delta e(x) = e(x) - e^0(x),
\end{equation}
and define $\eta_s = (\mu_1^0,\mu_0^0,e_s)$. The directional derivative of interest is then
\begin{align}
    D_{e}D_g\,\mathcal{L}^{\text{orth}}(g^0,\eta^0)[\Delta g,\Delta e]
    &= \left.\frac{\mathrm{d}}{\mathrm{d}s}\,D_g\mathcal{L}^{\text{orth}}(g^0,\eta_s)[\Delta g]\right|_{s=0}  \\
    &= -\mathbb{E}\left[
        \left.\frac{\mathrm{d}}{\mathrm{d}s}\, \tilde t_{\eta_s}(W,W') \right|_{s=0}
        \big( \Delta g(X) - \Delta g(X')\big)
      \right]. 
\end{align}

Now, by relying on the definition of $\tilde t_{\eta_s}(W,W')$ from Equation~(\ref{eq:pseudo_labels}), the inner derivative simplifies to
\begin{align}
    \left.\frac{\mathrm{d}}{\mathrm{d}s}\, \tilde t_{\eta_s}(W,W') \right|_{s=0} &= 
     \frac{t_{\tau^0}(X,X')\big(1-t_{\tau^0}(X,X')\big)}{\kappa}
     \cdot
     \left.\frac{\mathrm{d}}{\mathrm{d}s}
     \Big(
            \big(\phi_{\eta_s}(W)-\tau^0(X)\big)
            - \big(\phi_{\eta_s}(W')-\tau^0(X')\big)
        \Big)   \;\right|_{s=0}  \\
    &= \frac{t_{\tau^0}(X,X')\big(1-t_{\tau^0}(X,X')\big)}{\kappa}
     \, \left(
     \left.\frac{\mathrm{d}}{\mathrm{d}s}
            \phi_{\eta_s}(W)\right|_{s=0}
            - \left.\frac{\mathrm{d}}{\mathrm{d}s}
            \phi_{\eta_s}(W')\right|_{s=0} \;\right),
\end{align}
since only the doubly robust score $\phi_{\eta_s}$ depends on $s$.

If we now focus on the derivative of this doubly robust score, we get
\begin{align}
    \left.\frac{\mathrm{d}}{\mathrm{d}s}
     \phi_{\eta_s}(W)\;\right|_{s=0} &= \left.\frac{\mathrm{d}}{\mathrm{d}s} \left(
     \mu_{1}^0(X) - \mu_0^0(X) + \frac{T}{e_s(X)}\big(Y - \mu_{1}^0(X)\big)  - \frac{1-T}{1-e_s(X)}\big(Y - \mu_{0}^0(X)\big) \right) \right|_{s=0}  \\[0.2cm]
     &= -T\big(Y - \mu_{1}^0(X)\big)\frac{\Delta e(X)}{e^0(X)^2} - (1-T)\big(Y - \mu_{0}^0(X)\big)\frac{\Delta e(X)}{\big(1-e^0(X)\big)^2}  \\[0.2cm]
     &= -\Delta e(X)\left(\frac{T\big(Y - \mu_{1}^0(X)\big)}{e^0(X)^2}  + \frac{(1-T)\big(Y - \mu_{0}^0(X)\big)}{\big(1-e^0(X)\big)^2}  \right) .
\end{align}

We can again express the directional derivative using iterated expectations as
\begin{align}
    D_{e}D_g\,\mathcal{L}^{\text{orth}}(g^0,\eta^0)[\Delta g,\Delta e]
    &= -\mathbb{E}_{X, X'}\left[\mathbb{E}_{T,T',Y,Y' \mid X,X'}\left[\left.\frac{\mathrm{d}}{\mathrm{d}s}\, \tilde t_{\eta_s}(W,W') \right|_{s=0} \right] \big( \Delta g(X) - \Delta g(X')\big)\right]  \\
    &= 0, 
\end{align}
where the last equality follows from
\begin{align}
    &\mathbb{E}_{T,T',Y,Y' \mid X,X'}\left[\frac{T\big(Y - \mu_{1}^0(X)\big)}{e^0(X)^2}  + \frac{(1-T)\big(Y - \mu_{0}^0(X)\big)}{\big(1-e^0(X)\big)^2}  \right] 
    \\
    &= \frac{1}{e^0(X)^2}\;\;\mathbb{E}_{T,Y \mid X}\left[T\big(Y - \mu_{1}^0(X)\big)\right] + \frac{1}{\big(1-e^0(X)\big)^2}\;\;\mathbb{E}_{T,Y \mid X}\left[(1-T)\big(Y - \mu_{0}^0(X)\big)\right]   \\[0.1cm]
    &= 0, 
\end{align}
and likewise for the corresponding primed term, which therefore establishes that $\mathcal{L}^{\text{orth}}$ is Neyman-orthogonal with respect to $e$.

\textbf{Conclusion.}
We have shown that for each nuisance component $\eta=(\mu_1,\mu_0,e)$ and for arbitrary perturbation directions $(\Delta g,\Delta\eta)$, the cross-derivative of the loss satisfies
\begin{equation}
    D_{\mu_1}D_g\,\mathcal{L}^{\text{orth}}(g^0,\eta^0)[\Delta g,\Delta\mu_1]
    =
    D_{\mu_0}D_g\,\mathcal{L}^{\text{orth}}(g^0,\eta^0)[\Delta g,\Delta\mu_0]
    =
    D_{e}D_g\,\mathcal{L}^{\text{orth}}(g^0,\eta^0)[\Delta g,\Delta e]
    = 0.
\end{equation}
Therefore, the loss $\mathcal{L}^{\text{orth}}$ is Neyman-orthogonal with respect to the nuisance components $(\mu_1,\mu_0,e)$.
\end{proof}

\newpage
\section{Population optima of the considered learning objectives}
\label{app:minimizers}
In this section, we characterize the population optima of our considered losses at the true nuisances $\eta^0$. We show that all losses aim to recover a scoring function $g(x)$ whose ranking matches that of the conditional average treatment effect $\tau(x)$, but they differ in how strongly they constrain $g$. Therefore, every population-optimal solution must preserve the $\tau$-induced ordering, and the losses only differ in how much additional structure they impose on $g(x)$. We then discuss the role of the smoothing parameter $\kappa$ in the orthogonal ranking loss $\mathcal{L}^{\text{orth}}$.

\subsection{Population optima}

\textbf{Minimizer of the mean squared error loss.}
The simplest objective that we consider is the mean squared error between the scoring function $g(x)$ and the true CATE $\tau^0(x)$. It is given by
\begin{equation}
    \mathcal{L}^{\text{cate}}(g, \eta^0) = \mathbb{E}_{X}\left[\big(g(X)- \tau^0(X)\big)^2\right], 
\end{equation}
and is uniquely minimized at $g^0(x) = \tau^0(x)$. Therefore, the minimizer preserves the $\tau$-induced ordering as it fully identifies the CATE. Minimizing this loss requires recovering the full structure of $\tau(x)$ and not just the ranking. 

\textbf{Population optima of the binary ranking loss.}
We next consider the binary ranking loss, which is binary cross-entropy between the pairwise probability predictions of the model $p_g(x, x')$ and the binary indicators of the $\tau$-induced ordering. It is given by
\begin{align}
    \mathcal{L}^{\text{bin}}(g, \eta^0) 
    &= \mathbb{E}_{X, X'}\Big[-b_{\tau^0}(X, X')\log p_g(X, X') - \big(1-b_{\tau^0}(X, X')\big) \log \big(1-p_g(X, X')\big)\Big], 
\end{align}
where
\begin{equation}
    p_g(X, X') = \sigma\big(g(X)-g(X')\big),  \quad \quad \text{and} \quad b_{\tau^0}(X, X') = \mathbf{I}\left\{\tau^0(X)> \tau^0(X')\right\}. 
\end{equation}

If we consider a given pair $(x, x')$, the infimum of the loss contribution equals zero and is approached when the model probability $p_g(x, x')$ converges to the binary label $b_{\tau^0}(x, x')$. This occurs as the margin $g(x)-g(x')$ tends to $+\infty$ if $b_{\tau^0}(x, x')=1$ and to $-\infty$ otherwise. For any finite model $g$, the infimum cannot be attained, but correctly ordered pairs can have a loss contribution arbitrarily close to zero. In contrast, if a candidate model $g$ misorders a pair, the margin $g(x)-g(x')$ will have the wrong sign, and the loss contribution remains strictly bounded away from zero with a positive constant. Therefore, if  $g$ does not follow the $\tau$-induced ordering, there must be at least one such misordered pair keeping the population loss bounded away from its infimum. If $g$ does respect the $\tau$-induced ordering, all pairwise loss contributions can be made arbitrarily small by scaling $g$. It follows that any population-optimal solution of $\mathcal{L}^{\text{bin}}$, in the sense of achieving a risk arbitrarily close to the infimum, must preserve the ordering of the treatment effects.

The binary ranking loss does not admit a unique population minimizer, as its infimum cannot be attained by a finite model $g$. Once the model $g$ agrees with the ordering induced by $\tau$, scaling $g$ by any positive constant increases all pairwise margins and drives the loss arbitrarily close to zero. Additionally, as the loss depends only on the difference $g(x)-g(x')$, adding a constant shift to $g$ leaves all pairwise predictions $p_g(x, x')$, and thus the total loss, unchanged. As a result, any scoring function $g$ that follows the $\tau$-ordering can be considered population-optimal. These are all functions of the form $g^0(x) = h\big(\tau^0(x)\big)$ for some strictly increasing transformation $h$. This invariance is precisely why learning a ranking of treatment effects is an easier problem than learning the exact treatment effect magnitudes. 

\textbf{Minimizers of the soft ranking loss.}
We now consider the soft ranking loss, which is binary cross-entropy between the pairwise probability predictions of the model $p_g(x, x')$ and smoothed pairwise targets based on the CATE differences. The loss is given by
\begin{align}
    \mathcal{L}^{\text{soft}}(g, \eta^0) &= \mathbb{E}_{X, X' \sim \, \mathbb{P}(X)}\Big[-t_{\tau^0}(X, X')\log p_g(X, X') - \big(1-t_{\tau^0}(X, X')\big) \log \big(1-p_g(X, X')\big)\Big], 
\end{align}
where
\begin{equation}
    p_g(X, X') = \sigma\big(g(X)-g(X')\big),  \quad \quad \text{and} \quad t_{\tau^0}(X, X') = \sigma\left(\frac{\tau^0(X)- \tau^0(X')}{\kappa}\right), 
\end{equation}
with $\kappa > 0$ a smoothing parameter.

Unlike the binary ranking targets, the soft targets $t_{\tau^0}(x, x')$ do not only encode the ordering of $\tau^0(x)-\tau^0(x')$, but also the magnitude of this difference, scaled by $(1/ \kappa)$. The loss contribution for a pair $(x, x')$ is exactly minimized when the model probability $p_g(x, x')$ matches the soft targets $t_{\tau^0}(x, x')$, which occurs precisely when
\begin{equation}
g(x) - g(x') = \frac{\tau^0(x) - \tau^0(x')}{\kappa}.
\end{equation}

This constraint is satisfied jointly for all pairs when the scoring model takes the form
\begin{equation}
g^0(x) = \frac{1}{\kappa} \tau^0(x) + c,
\end{equation}
for some constant $c \in \mathbb{R}$. In contrast to the binary ranking loss, the soft ranking loss identifies $\tau(x)$ up to an additive constant and a fixed scaling factor $(1/\kappa)$, imposing more structure on $g$ than just the ranking.

\textbf{Minimizers of the orthogonal loss.}
We finally consider the orthogonal ranking loss, which was derived based on the influence function of the soft ranking loss. The population objective is given by
\begin{equation}
    \mathcal{L}^{\text{orth}}(g,\eta^0)
    = \mathbb{E}_{W, W'}\Big[
        -\tilde t_{\eta^0}(W,W')\log p_g(X,X')
        - \big(1-\tilde t_{\eta^0}(W,W')\big)\log\big(1-p_g(X,X')\big)
      \Big], 
\end{equation}
where $p_g(X, X') = \sigma\big(g(X) - g(X')\big)$, and where the pseudo labels are defined as
\begin{equation}
    \tilde t_{\eta^0}(W,W')
    = t_{\tau^0}(X,X')
      + \frac{t_{\tau^0}(X,X')\big(1-t_{\tau^0}(X,X')\big)}{\kappa}
        \Big(
            \big(\phi_{\eta^0}(W)-\tau^0(X)\big)
            - \big(\phi_{\eta^0}(W')-\tau^0(X')\big)
        \Big). 
\end{equation}

Since the orthogonal loss is also a binary cross-entropy objective, the loss contribution of a pair $(x, x')$ is minimized when the model probability $p_g(x, x')$ matches the conditional expectation of the pseudo labels given $X=x$ and $X=x'$, or
\begin{equation}
p_g(x, x') = \mathbb{E}\big[\;\tilde t_{\eta^0}(W,W') \mid X=x, X'=x'\big].
\end{equation}
To evaluate this expectation, we can rely on the property of the doubly robust score, i.e.,
\begin{equation}
\mathbb{E}\big[\phi_{\eta^0}(W) - \tau^0(X) \mid X\big] = 0,
\end{equation}
and likewise for the primed counterpart, which implies that the correction term in $\tilde t_{\eta^0}(W,W')$ has zero conditional mean. Consequently, the loss contribution is minimized when
\begin{equation}
p_g(x, x') = \mathbb{E}\big[\,\tilde t_{\eta^0}(W,W') \mid X=x, X'=x'\big] = t_{\tau^0}(x,x').
\end{equation}
This means that the correction based on the influence function does not change the population minimizers, and the orthogonal loss has the same minimizers as the soft ranking loss
\begin{equation}
g^0(x) = \frac{1}{\kappa} \tau^0(x) + c.
\end{equation}

\subsection{Interpretation of the smoothness parameter $\kappa$}
\label{app:kappainterpret}
Now that we have derived the population optima of the considered loss functions, we discuss the behavior of the orthogonal ranking loss as a function of the smoothness parameter $\kappa$. While, for any fixed and finite $\kappa>0$, the population minimizers preserve the ordering of the treatment effects, the choice of $\kappa$ strongly influences the shape of the loss landscape and the structure imposed on $g$.

\textbf{Small $\kappa$.} 
As $\kappa \to 0$, the soft targets converge to the targets of the binary ranking loss, which yields
\begin{equation}
    t_{\tau^0}(X, X') = \sigma\left(\frac{\tau^0(X)- \tau^0(X')}{\kappa}\right) \quad \longrightarrow \quad b_{\tau^0}(X, X') = \mathbf{I}\left\{\tau^0(X)> \tau^0(X')\right\}. 
\end{equation}
Accordingly, both the soft and orthogonal ranking losses ($\mathcal{L}^{\text{soft}}$ and $\mathcal{L}^{\text{orth}}$) approach the binary ranking loss ($\mathcal{L}^{\text{bin}}$). At the same time, their population minimizers $g^0(x)=\tau^0(x)/\kappa + c$ become unbounded, reflecting that the binary ranking loss does not admit a finite population minimizer. In this regime, any scoring function that preserves the ordering induced by $\tau^0(x)$ can achieve a population risk arbitrarily close to the infimum.

This behavior can also be understood through the geometry of these losses. At the population optimum, the second derivative of a single pairwise loss contribution (of both $\mathcal{L}^{\text{soft}}$ and $\mathcal{L}^{\text{orth}}$), with respect to the margin $g(x)-g(x')$, is given by 
\begin{align}
\nabla_{g(x)-g(x')}^2 \ell(g^0, \eta^0) 
&= p_{g^0}(x, x')\big(1-p_{g^0}(x, x')\big) \\[0.15cm] 
\label{eq:curvature}
&= \sigma\left(\frac{\tau^0(x) - \tau^0(x')}{\kappa}\right)\left(1-\sigma\left(\frac{\tau^0(x) - \tau^0(x')}{\kappa}\right)\right).
\end{align}
As the smoothing parameter $\kappa$ goes to zero, the loss becomes locally flat along directions that preserve the $\tau$-induced ordering. In the limit, the learning problem therefore reduces to recovering the ranking of the treatment effects, rather than the exact magnitudes of their differences.

\textbf{Intermediate $\kappa$.}
For $\kappa=1$, the soft targets become $t_{\tau^0}(X, X') = \sigma\left(\tau^0(X)- \tau^0(X')\right)$, and encode both the ordering and the relative magnitude of the treatment effect differences. The population minimizer of the soft and orthogonal ranking losses then takes the form $g^0(x)=\tau^0(x)+c$, implying that the learning problem requires recovering the full functional form of $\tau^0(x)$, rather than only the ordering. This behavior is also reflected in the curvature of the loss around the optimum (Eq.~\eqref{eq:curvature}) which is now strictly positive at the population minimizer and penalizes deviations of the margins $g(x)-g(x')$ from $\tau^0(x)-\tau^0(x')$. This choice of parameter $\kappa$ therefore enforces the most structure on the scoring function $g$. 

\textbf{Large $\kappa$.}
As $\kappa \to \infty$, the soft targets satisfy
\begin{equation}
    t_{\tau^0}(X, X') = \sigma\left(\frac{\tau^0(X)- \tau^0(X')}{\kappa}\right) \; \longrightarrow \; \frac{1}{2}, 
\end{equation}
and therefore become independent of the ordering and the magnitude of the treatment effect differences. Consequently, the population minimizer of the soft and orthogonal losses becomes $g^0(x)=c$, with the model predictions collapsing to $p_{g^0}(x, x') = 1/2$. In this regime, where $\kappa \to \infty$, all ordering information is lost, which enforces a constant scoring function $g(x)$. This arises only in the limit, and for any fixed and finite $\kappa>0$, both the soft and orthogonal targets remain dependent on $\tau^0(x)-\tau^0(x')$ and therefore preserve the ordering of treatment effects.

\textbf{Choice of $\kappa$.}
Since our objective is purely ranking of treatment effects, the discussion above suggests choosing $\kappa$ to be small. At the population level, this reduces the learning problem to recovering only the $\tau$-induced ordering. In practice however, the pairwise targets $t_{\tau^0}(x,x')$ or $b_{\tau^0}(x,x')$ are not observed and must be estimated from data. Such estimates can introduce plug-in bias, which is precisely the issue addressed by the orthogonal loss $\mathcal{L}^{\text{orth}}$.

At the same time, as $\kappa$ decreases, the orthogonal correction amplifies for pairs with similar treatment effects, which increases the variability of $\tilde t_{\eta}(w,w')$ in finite samples. Consequently, while smaller values of $\kappa$ impose weaker structural constraints on $g$ and move the problem closer to pure ranking, they also amplify the estimation noise in finite samples. The choice of $\kappa$ thus reflects a bias-variance trade-off. We propose to select $\kappa$ using an approximation of the AUTOC criterion on a validation set, which aligns the model selection directly with the ranking objective \cite{chernozhukov:2025:causal}.

\newpage
\section{Experimental setup}
\label{app:experimental_setup_DGPs}
This appendix provides additional information on the experimental setup and data-generating processes used in our experiments. We consider both a fully synthetic benchmark, where covariates are generated from known distributions, and semi-synthetic benchmarks, where real-world covariates are combined with simulated treatments and outcomes. We additionally benchmark \method on the real-world \textsc{Criteo} uplift dataset. Together, these experiments allow us to evaluate \method under controlled conditions as well as realistic feature distributions and observational settings. The full implementation of all data-generating processes and experiments is available at \url{https://github.com/henriarnoUG/rank-learner}.

\subsection{Synthetic data-generating process}
\label{app:syntheticdgp}
We begin our experiments on a fully synthetic dataset, which enables a controlled evaluation of our proposed \method. The data-generating process is adapted from prior work \cite{kamran:2024:learning} and is constructed such that the treatment effects are generated as a strictly increasing, non-linear transformation of a latent score $s(x)$. As a result, the ranking of treatment effects is fully determined by $s(x)$, while their magnitudes depend on the transformation, allowing us to decouple the ranking problem from pointwise effect estimation. The treatment assignment depends on $s(x)$ and an independent component (to induce confounding while maintaining overlap), and the baseline outcome $\mu_0(x)$ is generated partly from covariates not used in $s(x)$, such that outcome prediction does not fully determine the treatment effect ranking. 

The covariates $X \in \mathbb{R}^{10}$ are sampled i.i.d. from a multivariate standard normal distribution,
\begin{equation}
    X \sim \mathcal{N}(0, I_{10}).
\end{equation}
We define a latent score $s(X)$ that fully determines the ranking of treatment effects as
\begin{equation}
    s(X) = 0.8 X_1 + 0.6 X_2 + 0.4 X_3 + 0.3 X_1^2 - 0.2 X_2 X_3 .
\end{equation}
The treatment effects are generated as strictly increasing, non-linear transformations of $s(X)$, i.e.,
\begin{equation}
    \tau(X) = s(X) + 0.5 \tanh\big( s(X)\big).
\end{equation}
The nuisance components (propensity score and potential outcomes) are generated as 
\begin{align}
    &e(X) = \sigma\Big(0.8 \,s(X) + 0.6 \big( X_6 - 0.5 X_7 \big) \Big) , \\
    &\mu_0(X) = 0.5 X_2 - 0.4 X_3 + 0.3 \sin(X_4) + 0.2 \big( X_5^2 - 1\big) , \\[0.1cm]
    & \mu_1(X) = \mu_0(X) + \tau(X)
\end{align}
where $\sigma(.)$ denotes the logistic sigmoid.

The treatment indicators and observed outcomes are sampled as
\begin{align}
    &T \sim \mathrm{Bernoulli}\big(e(X)\big)\\
    &Y = T\,\mu_1(X) + (1-T)\,\mu_0(X) + \varepsilon, \qquad \varepsilon \sim \mathcal{N}(0, 0.6^2).
\end{align}

For all synthetic experiments, we generate a fixed test set of 1,000 samples that is exclusively used for evaluation. Training data are generated with varying sample sizes (ranging from 100 to 2,000 samples). This allows us to study the ranking performance as a function of training size, which directly affects the quality of the estimated nuisance components. For each considered training size $n$, we use sample splitting: $n$ samples are used to estimate the nuisance components (of which 80\% are used to fit the models and 20\% for validation), and an independent set of $n$ samples is used to train the second-stage models (again split into 80\% for training and 20\% for validation). All experiments are repeated over five random seeds, which affect both data sampling and weight initialization of the models.

\newpage
\subsection{Semi-synthetic data-generating processes}
\label{app:semisynthetic}
In the semi-synthetic setting, we use real-world covariates from three established datasets covering distinct application domains: \textsc{MovieLens} (recommender systems) \cite{harper:2015:movielens}, \textsc{MIMIC-III} (healthcare) \cite{johnson:2016:mimiciii}, and the \textsc{Current Population Survey} (public policy) \cite{flood:2025:ipums}. This preserves realistic feature distributions while still enabling controlled evaluation with access to ground-truth treatment effects. The exact data-generating processes can be found in the accompanying code repository.

Across the three semi-synthetic datasets, we follow the same data-generating structure as in the fully synthetic setting. In particular, treatment effects are generated as strictly increasing, non-linear transformations of a latent score $s(x)$, such that the ranking of treatment effects is fully determined by $s(x)$, while the magnitudes depend on the transformation. Treatment assignment depends on $s(x)$ and an independent component to induce confounding while maintaining overlap, and $\mu_0(x)$ is generated only partly from $s(x)$. All components of the generated data are designed to reflect realistic, domain-specific mechanisms for the considered applications.

For all semi-synthetic experiments, we generate a fixed test set of 1,000 samples that is used exclusively for evaluation. As in the synthetic setting, we perform sample splitting with nuisance components and second-stage models trained on separate samples. We consider a fixed training size of 1,000 samples, of which 80\% is used for training and 20\% for validation. All experiments are repeated over five random seeds, which affect both data sampling and weight initialization. 

$\bullet$\,\textbf{\textsc{MovieLens}} \cite{harper:2015:movielens}. From this dataset, we extract user-level covariates capturing demographics (age, gender, and occupation), behavioral traits (mean and standard deviation of past movie ratings), and features capturing movie preference (genre-specific share of watched movies). We simulate a setting where the treatment corresponds to exposure to a targeted advertisement on a movie platform, and the outcome represents user engagement. Treatment assignment is biased towards specific occupational groups, in particular students, and the treatment effects are primarily driven by user preferences, reflecting that users whose preferred genres align more closely with the advertised content respond more positively. Baseline engagement is mostly driven by the historical rating behavior and is weakly correlated with the treatment effects.

$\bullet$\,\textbf{\textsc{MIMIC-III}} \cite{johnson:2016:mimiciii}. From this dataset, we extract patient-level covariates capturing demographics (age and gender) and vital sign measurements (heart rate, blood pressure, arterial pressure, respiratory rate, and oxygen saturation). We simulate a clinical decision-making setting in which a medical treatment is administered that affects patient recovery (the outcome reflects a continuous recovery score). The treatment effects are primarily driven by instability in blood circulation (based on abnormal blood pressure and arterial pressure measurements), reflecting that patients with a more severe condition can benefit most from treatment. Treatment assignment is biased towards patients exhibiting signs of acute physiological distress, such as an elevated heart rate and respiratory rate (while also depending on blood circulation to induce confounding). Baseline recovery is primarily driven by fever and oxygen saturation and is only mildly correlated with the treatment effects.

$\bullet$\,\textbf{\textsc{CPS}} \cite{flood:2025:ipums}. From this dataset, we extract individual-level covariates capturing demographics (age, gender, and educational attainment), household structure (marital status and number of dependents), and labor-market characteristics (hours worked per week, weeks worked per year, firm size, wage, and household income). We simulate a policy intervention (representing, for instance, a job-training program), with the outcome representing a continuous measure of future earnings potential. Treatment effects are primarily driven by individual skill readiness and economic need (higher effects for younger individuals with a low current income and a high educational attainment). Treatment assignment follows a household-based eligibility heuristic that prioritizes individuals with dependents and greater family responsibilities. Baseline outcomes are driven mainly by employment intensity and current income and are moderately correlated with the treatment effects.

\subsection{Real-world uplift benchmark}
\label{app:criteo}
We complement the synthetic and semi-synthetic experiments with the real-world \textsc{Criteo} uplift benchmark \cite{diemert:2021:largea}, which contains randomized data from online advertising campaigns. Each observation consists of user covariates $X\in\mathbb{R}^{12}$, a treatment indicator $T\in\{0,1\}$ corresponding to ad exposure, and a binary outcome $Y\in\{0,1\}$ indicating whether the user visited the advertiser's website. The treatment effects therefore capture how ad exposure changes the probability of visiting the website.

\newpage
To evaluate \method on \textsc{Criteo} in an observational setting, we follow \citet{diemert:2021:largea} and induce confounding in the training data through selective subsampling, while leaving the test data randomized. Starting from a large randomized training pool, we define a covariate-dependent sampling probability
\begin{equation}
p(x) = \big(1-2\delta \big) \,\sigma\left(\sum_j x_j\right)+\delta,
\qquad \delta=0.05,
\end{equation}
where the summation is taken over the five most predictive covariates for the outcome and $\sigma(\cdot)$ denotes the logistic sigmoid. We then retain treated observations with probability $p(x)$ and control observations with probability $1-p(x)$. This induces dependence between treatment assignment and covariates in the training data while preserving the original observed $(X,T,Y)$ triplets. In practice, a classifier predicting treatment assignment from the covariates achieves an AUC of $0.67$ on the confounded training data, compared to $0.50$ on the untouched randomized test data.

Since ground-truth treatment effects are unknown in the real-world setting, we evaluate ranking quality using the \emph{area under the uplift curve} (AUUC), computed on untouched randomized test data. AUUC measures how well a model ranks individuals according to their treatment effect: a strong uplift model assigns high scores to individuals who benefit most from treatment, such that the treated-control outcome difference is largest among the highest-ranked users. Following \citet{diemert:2021:largea}, we use the \emph{separate relative AUUC} metric, which is specifically designed to remain robust under the substantial treatment imbalance present in the \textsc{Criteo} benchmark. Due to the rare outcomes and relatively weak uplift signal in this benchmark, we report results on randomized test sets of varying size ($50$k, $500$k, and $1$M observations) to reduce evaluation noise. Unlike the synthetic and semi-synthetic experiments, we sample a new randomized test set for each seed rather than evaluating on a fixed test split. We estimate the nuisance functions using the same network architectures and training procedure as in the synthetic and semi-synthetic experiments. Since the outcome is binary and highly imbalanced, the response models are trained using weighted binary cross-entropy losses. We use sample splitting with fixed train and validation sizes of $10{,}000$ and $5{,}000$ samples, respectively.

\subsection{Implementation details}
\label{app:impldetails}

This section provides implementation details shared across all experiments. The same architectures and training procedures are used across datasets and model components. All experiments are conducted on a single machine equipped with an NVIDIA GeForce GTX~2080 GPU, and training is computationally lightweight, with most models converging within a few minutes. All nuisance functions, CATE estimators, and rankers are implemented as feedforward neural networks with a single hidden layer and ReLU activations. Output layers are linear for regression tasks and sigmoid for binary classification tasks. Models are trained using the Adam optimizer with a maximum of 50 epochs and early stopping based on the validation loss (patience of five epochs), retaining the parameters corresponding to the best validation performance.

Hyperparameters are minimally tuned once using a small random search (12 runs) on a validation dataset and then fixed across all experiments and seeds. The search space includes hidden layer width $\in\{64,128\}$, learning rate $\in\{10^{-4},3\cdot10^{-4},5\cdot10^{-4},10^{-3}\}$, weight decay $\in\{0,10^{-5},10^{-4}\}$, and batch size $\in\{128,256\}$. For \method and the plug-in ranker, the smoothness parameter $\kappa\in\{0.25,0.5,1,1.5,3\}$ is additionally tuned using the approximated validation AUTOC. Finally, since the orthogonal correction in the \method pseudo labels can occasionally yield values outside the $(0,1)$ interval, we clip these labels to $(0,1)$ as a simple stabilization strategy. We study the impact of this clipping procedure in Appendix~\ref{app:robustness_sensitivity}.

\newpage

\section{Extended experimental results}
\label{app:additional_results}

This section reports extended experimental results complementing the main paper. We first present the complete results for the synthetic, semi-synthetic, and real-world benchmarks. We then provide additional analyses on computational efficiency, robustness and sensitivity, and comparisons to additional baselines and benchmarks.

\subsection{Main experimental results}
\label{app:main_exp_results}

This subsection reports the complete experimental results complementing the findings presented in the main paper. In addition to AUTOC and AUUC, we report the mean policy value $\overline{V}$, which measures the average outcome under the implied policy that treats the top-ranked individuals according to the learned ranking function $\hat g$.

\subsubsection{Synthetic benchmark results}

Table~\ref{tab:synthetic} reports AUTOC and mean policy value across training sizes on the synthetic benchmark. Across both metrics, \method consistently achieves the strongest performance among the considered baselines. The benefits over the non-orthogonal plug-in ranker are largest in small-sample regimes, where nuisance estimation is most difficult, supporting the theoretical motivation for orthogonalization. As the training size increases, performance differences between methods gradually decrease and all approaches converge toward the oracle performance, which aligns with the improving nuisance estimation quality reported in Table~\ref{tab:nuisance_synth}. Overall, these results demonstrate that directly optimizing an orthogonal ranking objective leads to robust ranking performance across sample sizes.

\begin{table}[h]
\centering
\small
\caption{\textbf{Synthetic benchmark (main results).} Test AUTOC and mean policy value (mean $\pm$ std dev over five seeds) across training sizes. \emph{Higher is better}. The oracle column reports the metrics obtained by ranking the test set using the true treatment effects. Best mean is shown in \textbf{bold}.}
\label{tab:synthetic}

\begin{tabular}{llcccccc}
\toprule
Metric & Method & $n=100$ & $n=250$ & $n=500$ & $n=1{,}000$ & $n=2{,}000$ & oracle \\
\midrule

\multirow{4}{*}{AUTOC ($\uparrow$)}
& T-learner
& 0.88 $\pm$ 0.17
& 0.96 $\pm$ 0.14
& 1.24 $\pm$ 0.05
& 1.32 $\pm$ 0.02
& 1.36 $\pm$ 0.00 
& \multirow{4}{*}{1.40} \\

& DR-learner
& 0.80 $\pm$ 0.18
& 1.16 $\pm$ 0.12
& 1.28 $\pm$ 0.05
& 1.33 $\pm$ 0.02
& 1.36 $\pm$ 0.02 \\

& Plug-in ranker
& 0.69 $\pm$ 0.32
& 0.95 $\pm$ 0.14
& 1.24 $\pm$ 0.06
& 1.31 $\pm$ 0.02
& 1.36 $\pm$ 0.00 \\

& Rank-learner (\emph{ours})
& \textbf{1.00 $\pm$ 0.19}
& \textbf{1.28 $\pm$ 0.03}
& \textbf{1.31 $\pm$ 0.01}
& \textbf{1.34 $\pm$ 0.01}
& \textbf{1.37 $\pm$ 0.00} \\

\midrule

\multirow{4}{*}{$\overline{V}$ ($\uparrow$)}
& T-learner
& 0.46 $\pm$ 0.04
& 0.49 $\pm$ 0.03
& 0.56 $\pm$ 0.01
& 0.57 $\pm$ 0.01
& 0.58 $\pm$ 0.00
& \multirow{4}{*}{0.59} \\

& DR-learner
& 0.42 $\pm$ 0.05
& 0.53 $\pm$ 0.03
& 0.55 $\pm$ 0.02
& 0.57 $\pm$ 0.00
& 0.58 $\pm$ 0.00 \\

& Plug-in ranker
& 0.38 $\pm$ 0.09
& 0.48 $\pm$ 0.03
& 0.56 $\pm$ 0.01
& 0.57 $\pm$ 0.01
& 0.58 $\pm$ 0.00 \\

& Rank-learner (\emph{ours})
& \textbf{0.47 $\pm$ 0.06}
& \textbf{0.55 $\pm$ 0.01}
& \textbf{0.56 $\pm$ 0.01}
& \textbf{0.57 $\pm$ 0.00}
& \textbf{0.58 $\pm$ 0.00} \\

\bottomrule
\end{tabular}
\end{table}

\begin{table}[h]
\centering
\small
\caption{\textbf{Synthetic benchmark (nuisance estimation).} Nuisance estimation quality across training sizes, evaluated on the test set (mean $\pm$ std dev over seeds). MSE denotes mean squared error and BCE denotes binary cross-entropy. \textit{Lower is better.}}
\label{tab:nuisance_synth}

\begin{tabular}{lccccc}
\toprule
Metric & $n=100$ & $n=250$ & $n=500$ & $n=1{,}000$ & $n=2{,}000$ \\
\midrule

MSE ($\mu_0$)
& 0.30 $\pm$ 0.09
& 0.24 $\pm$ 0.06
& 0.14 $\pm$ 0.03
& 0.11 $\pm$ 0.04
& 0.08 $\pm$ 0.03 \\

MSE ($\mu_1$)
& 1.72 $\pm$ 0.28
& 1.66 $\pm$ 0.23
& 0.78 $\pm$ 0.22
& 0.35 $\pm$ 0.05
& 0.18 $\pm$ 0.03 \\

BCE ($e$)
& 0.689 $\pm$ 0.004
& 0.685 $\pm$ 0.009
& 0.678 $\pm$ 0.011
& 0.667 $\pm$ 0.003
& 0.663 $\pm$ 0.002 \\

\bottomrule
\end{tabular}
\vspace{-0.1cm}
\end{table}

\subsubsection{Semi-synthetic benchmark results}

Table~\ref{tab:semisynth} reports AUTOC and mean policy value on the semi-synthetic benchmarks, where real-world covariates are combined with simulated treatments and outcomes. Across all datasets and metrics, \method achieves the strongest performance among the considered baselines. The improvements over the non-orthogonal plug-in ranker remain consistent across datasets, indicating that the advantages of orthogonal ranking extend beyond fully synthetic settings to realistic feature distributions.

\begin{table}[h]
\small
\centering
\caption{\textbf{Semi-synthetic benchmarks.} Test AUTOC and mean policy value (mean $\pm$ std dev over five seeds) with training size $n=1{,}000$. \emph{Higher is better}. The oracle column reports the metrics obtained by ranking the test set using the true treatment effects. Best mean is shown in \textbf{bold}.}
\label{tab:semisynth}
\vspace{0.2cm}
\begin{tabular}{llccc}
\toprule
Metric & Method & \textsc{MovieLens} & \textsc{MIMIC-III} & \textsc{CPS} \\
\midrule

\multirow{5}{*}{AUTOC ($\uparrow$)}
& T-learner              & 1.31 $\pm$ 0.03                & 1.12 $\pm$ 0.05 & 0.87 $\pm$ 0.08 \\
& DR-learner             & 1.34 $\pm$ 0.02                & 1.17 $\pm$ 0.02 & 0.92 $\pm$ 0.02 \\
& Plug-in ranker         & 1.30 $\pm$ 0.03                & 1.11 $\pm$ 0.05 & 0.87 $\pm$ 0.08 \\
& Rank-learner (\emph{ours})           & \textbf{1.35 $\pm$ 0.01}       & \textbf{1.18 $\pm$ 0.02} & \textbf{0.95 $\pm$ 0.01} \\
& oracle                 & 1.39                           & 1.22 & 1.01 \\

\midrule

\multirow{5}{*}{$\overline{V}$ ($\uparrow$)}
& T-learner              & 0.78 $\pm$ 0.01                & 0.95 $\pm$ 0.02 & 1.43 $\pm$ 0.03 \\
& DR-learner             & 0.79 $\pm$ 0.00                & 0.97 $\pm$ 0.00 & 1.45 $\pm$ 0.01 \\
& Plug-in ranker         & 0.78 $\pm$ 0.01                & 0.95 $\pm$ 0.02 & 1.43 $\pm$ 0.03 \\
& Rank-learner (\emph{ours})           & \textbf{0.79 $\pm$ 0.00}       & \textbf{0.97 $\pm$ 0.00} & \textbf{1.46 $\pm$ 0.00} \\
& oracle                 & 0.80                           & 0.98 & 1.48 \\

\bottomrule
\end{tabular}
\end{table}

\subsubsection{Real-world uplift benchmark results}

Table~\ref{tab:realdata_auuc} reports AUUC on the \textsc{Criteo} uplift benchmark across randomized test sets of increasing size ($50$k, $500$k, $1$M). Across all test sizes, \method achieves the strongest mean AUUC among the considered baselines. Overall, these results support the benefit of directly optimizing an orthogonal ranking objective under confounded training in realistic uplift modeling settings.

\begin{table}[h]
\small
\centering
\caption{\textbf{Criteo uplift benchmark.} AUUC on randomized test sets of increasing size (50k, 500k, and 1M). Results are reported as mean $\pm$ std dev over five seeds. Values are reported $\times 10^3$ for readability. \emph{Higher is better.} Best mean is shown in \textbf{bold}.}
\label{tab:realdata_auuc}
\setlength{\tabcolsep}{4pt}
\begin{tabular}{lccc}
\toprule
Method & \multicolumn{3}{c}{AUUC $\times 10^3$} \\
\cmidrule(lr){2-4}
 & 50k & 500k & 1M \\
\midrule
T-learner
& 3.74 $\pm$ 1.18
& 5.09 $\pm$ 1.59
& 5.08 $\pm$ 1.62 \\

DR-learner
& 4.44 $\pm$ 1.12
& 5.01 $\pm$ 1.04
& 5.17 $\pm$ 1.13 \\

Plug-in ranker
& 3.78 $\pm$ 1.59
& 4.99 $\pm$ 1.69
& 5.04 $\pm$ 1.65 \\

Rank-learner (\emph{ours})
& \textbf{5.19 $\pm$ 1.87}
& \textbf{5.83 $\pm$ 0.57}
& \textbf{5.90 $\pm$ 0.40} \\
\bottomrule
\end{tabular}
\vspace{-0.15cm}
\end{table}

\subsection{Computational efficiency} 
\label{app:computational_eff}

\subsubsection{Subsampling training pairs}

Figure~\ref{fig:pair_subsampling} studies the effect of subsampling training pairs on the ranking performance of \method on the synthetic benchmark. In this experiment, we fix the training size to $n=1{,}000$ and estimate the nuisance components using sample splitting. The second-stage ranker is then trained using only a fraction of the $n^2$ available training pairs per epoch, with fractions ranging from $0.01\%$ to $50\%$. For each fraction, training pairs are sampled randomly at each epoch, and results are averaged over five random seeds.

Despite the quadratic number of possible training pairs, \method achieves strong ranking performance using only a small subset of the available pairs. In particular, using approximately $1\%$ of the $n^2$ possible pairs already recovers most of the attainable AUTOC, with diminishing returns beyond this point. These results indicate that the computational cost of pairwise training can be reduced substantially without sacrificing performance. Together with the nuisance estimation results in Table~\ref{tab:nuisance_synth}, this suggests that the overall performance is driven primarily by the quality of the nuisance estimates rather than by extensive pairwise training.

\begin{figure}[h]
    \centering
    \includegraphics[width=0.55\textwidth]{figures/synthetic_pairs_se.pdf}
    \caption{\textbf{Synthetic benchmark (pair subsampling).} Test AUTOC (mean $\pm$ s.e. over five seeds) of \method as a function of the number of sampled training pairs per epoch ($n=1{,}000$ with $n^2=10^6$ possible training pairs). \emph{Higher is better.} The horizontal axis shows the fraction of pairs used per epoch (log).}
    \label{fig:pair_subsampling}
\end{figure}

\subsubsection{Alternative pair sampling strategy}

In addition to uniform pair subsampling, we also evaluate a weighted sampling strategy that prioritizes pairs with larger correction weights. Specifically, pairs are sampled with probability proportional to the weight $\omega_{\tau}(X,X')$ introduced in Theorem~\ref{thm:orthogonality}, which is largest for pairs with more ambiguous plug-in rankings. Table~\ref{tab:sampling_kappa_comparison} compares weighted and uniform subsampling using $10\%$ of the available training pairs across training sizes and $\kappa$ values. Overall, we observe only minor differences between weighted and uniform pair subsampling across these settings. One possible explanation is that the orthogonal correction in \method is already pair-dependent through the loss itself, and assigns larger corrections to more ambiguous pairs. These results suggest that uniform subsampling provides a simple and effective strategy in practice.

\begin{table}[h]
\centering
\small
\caption{\textbf{Synthetic benchmark (pair subsampling strategies).} Test AUTOC (mean $\pm$ std dev over five seeds) for weighted and uniform pair subsampling using $10\%$ of the available training pairs across training sizes and $\kappa$ values. \emph{Higher is better.}}
\label{tab:sampling_kappa_comparison}

\begin{tabular}{p{3cm}p{1.5cm}ccc}
\toprule
Smoothness parameter & Sampling & $n=500$ & $n=1{,}000$ & $n=2{,}000$ \\
\midrule

\multirow{2}{*}{$\kappa=0.5$}
& Weighted
& 1.31 $\pm$ 0.02
& 1.32 $\pm$ 0.02
& 1.37 $\pm$ 0.00 \\
& Uniform
& 1.30 $\pm$ 0.01
& 1.35 $\pm$ 0.00
& 1.37 $\pm$ 0.00 \\
\midrule

\multirow{2}{*}{$\kappa=1.5$}
& Weighted
& 1.29 $\pm$ 0.03
& 1.32 $\pm$ 0.02
& 1.36 $\pm$ 0.01 \\
& Uniform
& 1.29 $\pm$ 0.02
& 1.33 $\pm$ 0.01
& 1.36 $\pm$ 0.01 \\
\midrule

\multirow{2}{*}{$\kappa=3.0$}
& Weighted
& 1.27 $\pm$ 0.03
& 1.31 $\pm$ 0.03
& 1.33 $\pm$ 0.01 \\
& Uniform
& 1.27 $\pm$ 0.03
& 1.32 $\pm$ 0.01
& 1.34 $\pm$ 0.01 \\
\bottomrule
\end{tabular}
\vspace{-0.3cm}
\end{table}

\subsection{Robustness and sensitivity analyses}
\label{app:robustness_sensitivity}
In this section, we study the sensitivity of \method to overlap violations, the choice of smoothness parameter $\kappa$, clipping of the pseudo labels, and nuisance misspecification.

\subsubsection{Overlap sensitivity}

Figure~\ref{fig:overlap_sensitivity} studies the effect of decreasing overlap on ranking performance across methods on the synthetic benchmark. In this experiment, we fix the training size to $n=500$ and vary the degree of overlap by modifying the treatment assignment mechanism, while keeping the remaining components of the synthetic data-generating process fixed (cf.\ Appendix~\ref{app:syntheticdgp}). Specifically, we replace the propensity score by
\begin{equation}
    e(X) = \sigma \Big( \alpha\big(0.8 \,s(X) + 0.6( X_6 - 0.5 X_7) \big) \Big),
\end{equation}
where larger values of $\alpha$ correspond to reduced overlap. We quantify overlap using the empirical mean of $\min\{e(X), 1-e(X)\}$ over the generated sample. By varying $\alpha$, this overlap measure ranges from approximately $0.5$ (nearly random treatment assignment) to values below $0.05$ (virtually no overlap). As shown in Figure~\ref{fig:overlap_sensitivity}, ranking performance degrades smoothly for all methods as overlap decreases. Nevertheless, \method consistently achieves the strongest performance across the considered overlap regimes, indicating that the proposed ranking objective remains robust under limited overlap.

\begin{figure}[h]
    \centering
    \includegraphics[width=0.45\textwidth]{figures/autoc_overlap_se.pdf}
    \caption{\textbf{Synthetic benchmark (overlap sensitivity).} Test AUTOC (mean $\pm$ s.e. over five seeds) as a function of overlap (decreasing left to right) for $n=500$. \emph{Higher is better}. Overlap is varied by changing treatment assignment, while the remaining components of the synthetic data-generating process are kept fixed.}
    \label{fig:overlap_sensitivity}
\end{figure}

\subsubsection{Nuisance misspecification}

To study robustness to nuisance misspecification, we perform an additional stress test on the synthetic benchmark in which the nuisance functions are deliberately misspecified. Specifically, instead of the neural network nuisance models used throughout the main experiments, we estimate the nuisance components using linear models despite the underlying data-generating process being non-linear. Table~\ref{tab:synthetic_misspec_autoc} reports the resulting AUTOC values for $n=250$. As expected, all methods degrade under misspecification. However, the degradation is larger for the non-orthogonal methods (T-learner and plug-in ranker) than for the orthogonal methods (\method and DR-learner).

\begin{table}[h]
\small
\centering
\caption{\textbf{Synthetic benchmark (nuisance misspecification).} Test AUTOC (mean $\pm$ std dev over five seeds) for $n=250$ under the standard setting and under deliberately misspecified linear nuisance models. \emph{Higher is better}. The oracle column reports the AUTOC obtained by ranking the test set using the true treatment effects. Best mean is shown in \textbf{bold}.}
\label{tab:synthetic_misspec_autoc}

\begin{tabular}{lccc}
\toprule
Method & Standard nuisances & Linear nuisances & oracle \\
\midrule
T-learner & 0.96 $\pm$ 0.14 & 0.76 $\pm$ 0.07 & \multirow{4}{*}{1.40} \\
DR-learner & 1.16 $\pm$ 0.12 & 1.09 $\pm$ 0.11 & \\
Plug-in ranker & 0.95 $\pm$ 0.14 & 0.66 $\pm$ 0.06 & \\
Rank-learner (\emph{ours}) & \textbf{1.28 $\pm$ 0.03} & \textbf{1.19 $\pm$ 0.04} & \\
\bottomrule
\end{tabular}
\end{table}

\subsubsection{Robustness to label clipping}

In our experiments, we clip the orthogonal pseudo labels to the interval $(0,1)$ for numerical stability, since the orthogonal correction can produce values outside the valid range of the binary cross-entropy loss. Since this modifies the ideal orthogonal objective, we empirically assess whether the resulting loss still exhibits reduced sensitivity to nuisance perturbations. On the synthetic benchmark, where the true nuisances are known, we perturb the nuisance functions around their true values and measure the resulting sensitivity of the second-stage gradient norm. Specifically, for each objective $\mathcal{L}\in\{\mathcal{L}^{\textup{soft}},\mathcal{L}^{\textup{orth}}\}$, we compute
\begin{equation}
D_{\mathcal{L}}(\varepsilon)
=
\frac{
\left\|
\nabla_{\theta}\mathcal{L}\!\left(g,\eta_{\varepsilon}\right)
\right\|_{2}
-
\left\|
\nabla_{\theta}\mathcal{L}\!\left(g,\eta^{0}\right)
\right\|_{2}
}{
\varepsilon},
\end{equation}
where $\eta_{\varepsilon}$ denotes nuisance perturbations of magnitude $\varepsilon$ around the true nuisances $\eta^0$. Figure~\ref{fig:orthogonality_diagnostic} shows the resulting gradient sensitivity across perturbation magnitudes $\varepsilon$ and smoothness parameters $\kappa$. The clipped orthogonal objective consistently exhibits substantially lower sensitivity than the corresponding plug-in objective.

\begin{figure}[h]
    \centering
    \begin{subfigure}[t]{0.33\linewidth}
        \centering
        \includegraphics[width=\linewidth]{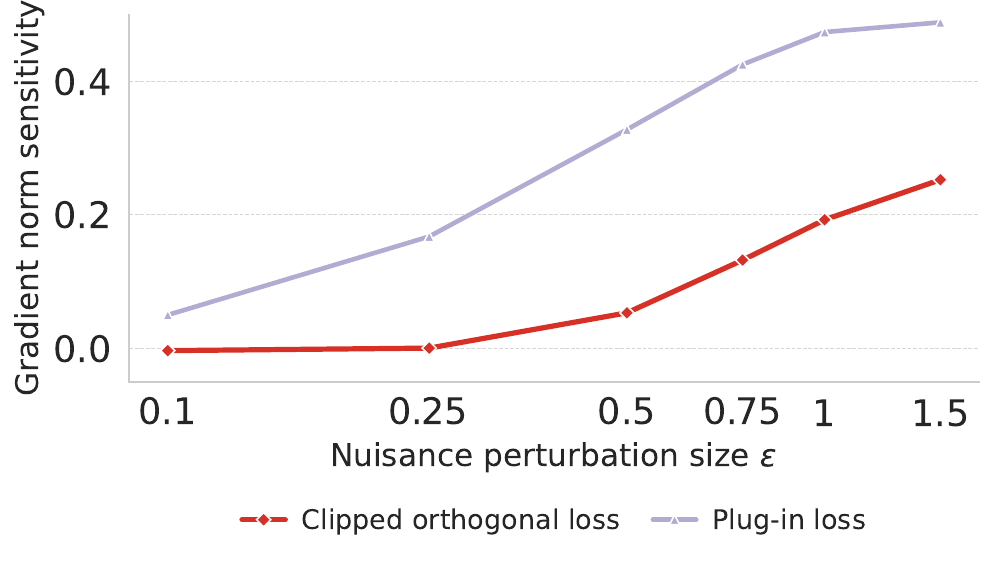}
        \caption{$\kappa=0.25$}
        \label{fig:derivative0.25}
    \end{subfigure}\hfill
    \begin{subfigure}[t]{0.33\linewidth}
        \centering
        \includegraphics[width=\linewidth]{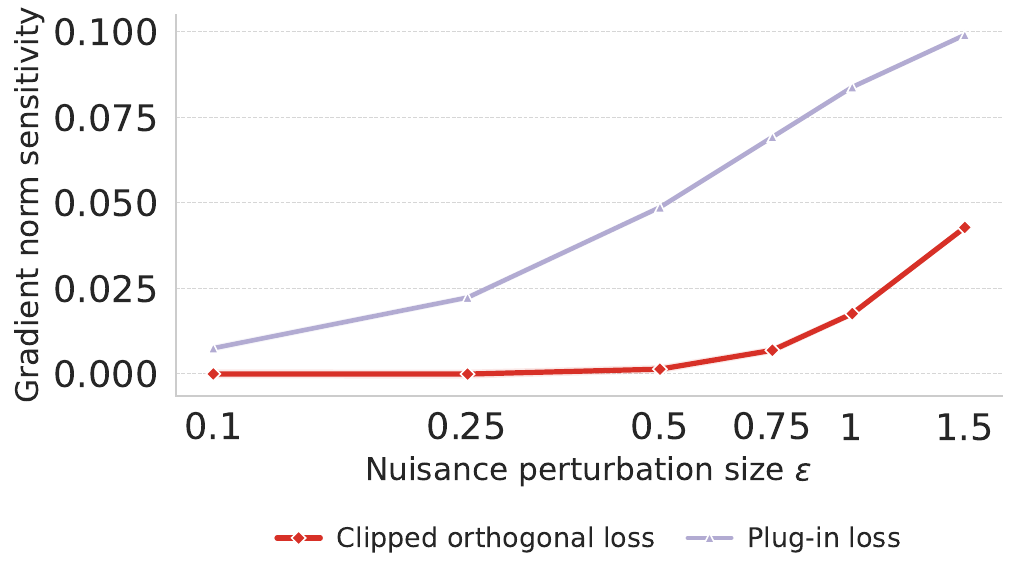}
        \caption{$\kappa=1.5$}
        \label{fig:derivative1}
    \end{subfigure}\hfill
    \begin{subfigure}[t]{0.33\linewidth}
        \centering
        \includegraphics[width=\linewidth]{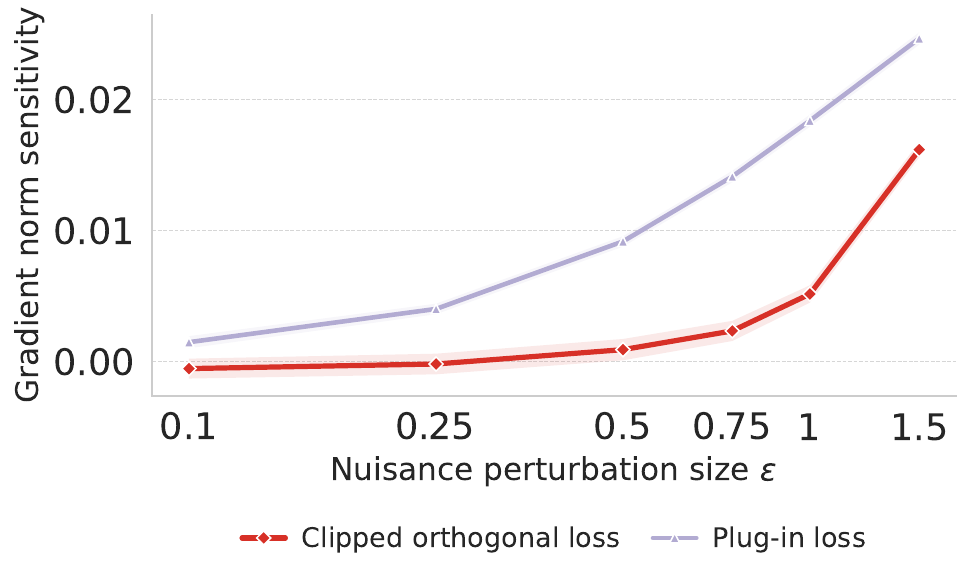}
        \caption{$\kappa=3.0$}
        \label{fig:derivative3}
    \end{subfigure}

    \caption{\textbf{Empirical orthogonality diagnostic.} Sensitivity of the parameter gradient norm of the training objectives to nuisance perturbations around the true nuisances. Across representative values of $\kappa$, the clipped orthogonal objective exhibits substantially lower local sensitivity than the plug-in objective. \emph{Vertical axis not aligned for visual clarity.}}
    \label{fig:orthogonality_diagnostic}
\end{figure}

\subsubsection{Sensitivity to the choice of $\kappa$}
Figure~\ref{fig:kappa_performance} studies the sensitivity of ranking performance to the smoothness parameter $\kappa$ on the synthetic benchmark. We evaluate \method and the plug-in ranker across a broad range of $\kappa$ values and training sizes. Overall, \method remains relatively insensitive to the choice of $\kappa$ across the considered values. In particular, for smaller training sizes, \method consistently outperforms the non-orthogonal plug-in ranker for all considered values of $\kappa$. As the training size increases, performance differences between the methods decrease, which is consistent with improved nuisance estimation and convergence toward oracle performance. These results indicate that the proposed approach is not highly sensitive to the choice of the smoothness parameter in practice.

\begin{figure}[h]
    \centering
    \begin{subfigure}[t]{0.33\linewidth}
        \centering
        \includegraphics[width=\linewidth]{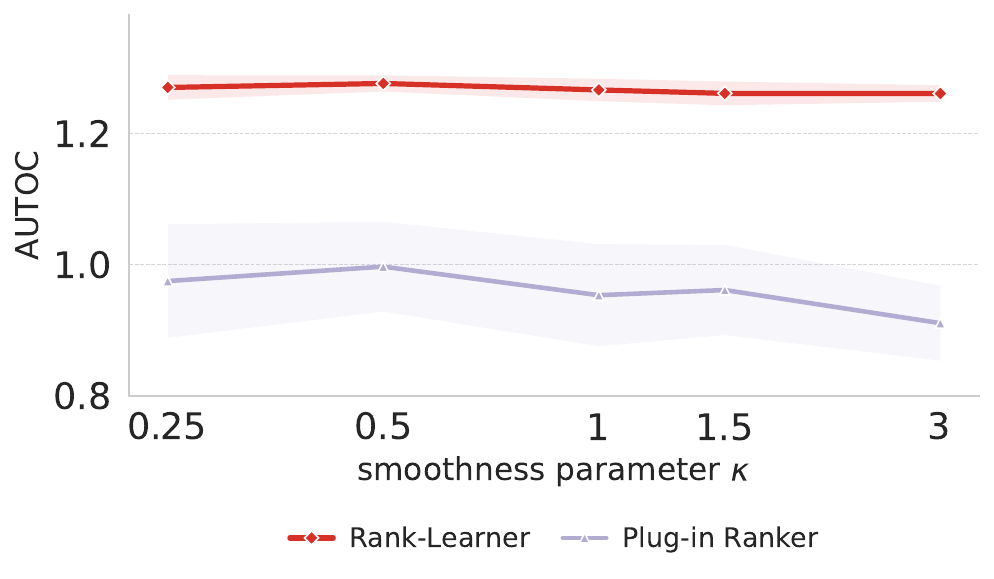}
        \caption{$n=250$}
        \label{fig:kappa250}
    \end{subfigure}\hfill
    \begin{subfigure}[t]{0.33\linewidth}
        \centering
        \includegraphics[width=\linewidth]{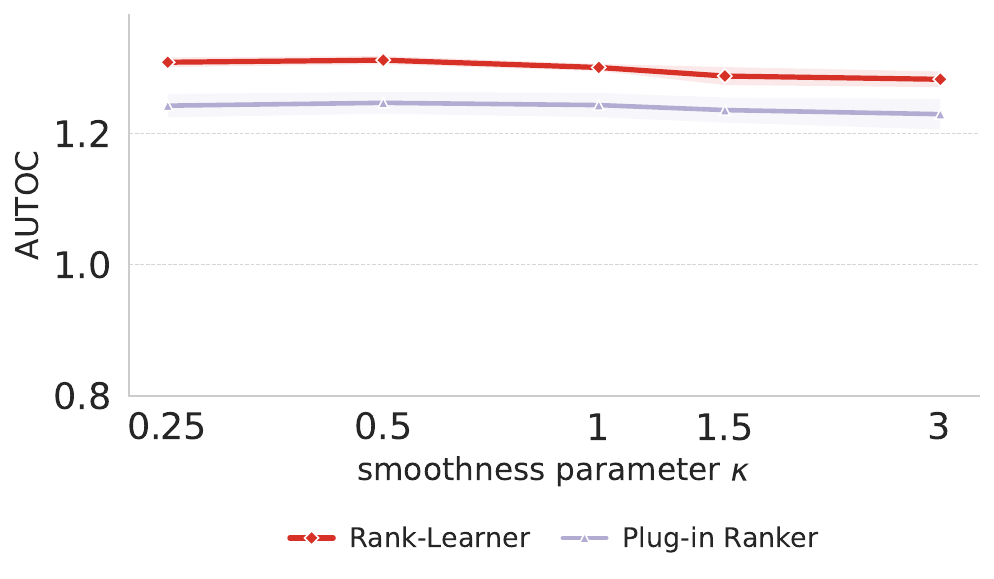}
        \caption{$n=500$}
        \label{fig:kappa500}
    \end{subfigure}\hfill
    \begin{subfigure}[t]{0.33\linewidth}
        \centering
        \includegraphics[width=\linewidth]{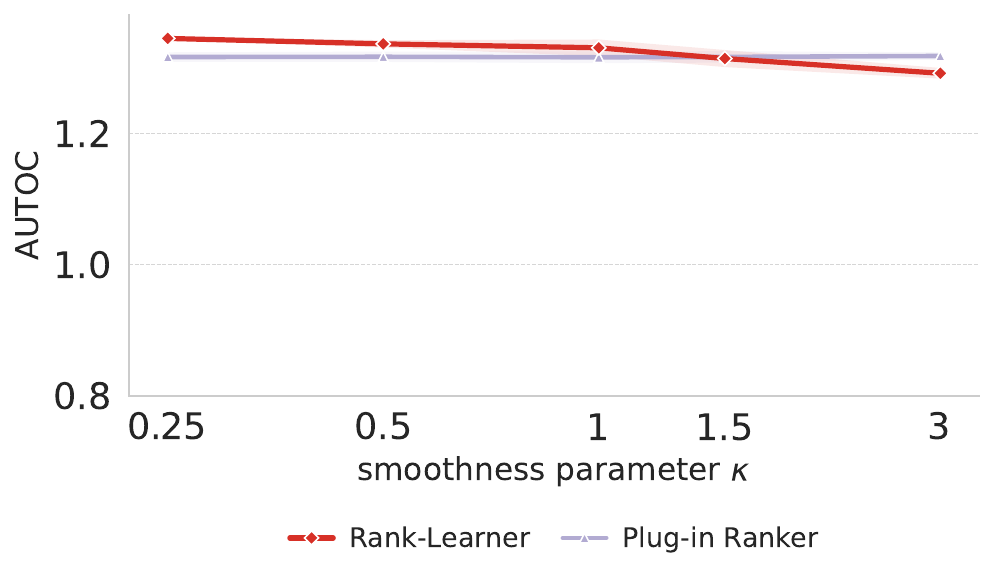}
        \caption{$n=1{,}000$}
        \label{fig:kappa2000}
    \end{subfigure}

    \caption{\textbf{Synthetic benchmark ($\kappa$ sensitivity).} Test AUTOC (mean $\pm$ s.e. over five seeds) as a function of the smoothness parameter $\kappa$ across training sizes. \emph{Higher is better}. \method remains relatively robust to the choice of $\kappa$ over the considered range.}    \label{fig:kappa_performance}
\end{figure}

\subsection{Additional comparisons}
\label{app:additional_comparisons}

The primary goal of our main experiments was to isolate the effect of \emph{(i)} direct ranking versus standard CATE estimation and \emph{(ii)} orthogonal versus plug-in objectives for ranking. Therefore, the main experiments focus on baselines that allow us to directly study these empirical questions. In this section, we additionally compare \method against recent CATE estimation methods and evaluate the proposed method on additional standard causal inference benchmarks.

\subsubsection{Additional baselines}

Throughout the main experiments, we rely on relatively small neural networks with minimal hyperparameter tuning in order to isolate the empirical effect of direct ranking and orthogonalization. In contrast, the experiments in this subsection are intended to compare against baselines using substantially more sophisticated architectures and training procedures. We have therefore additionally implemented \method in the same style as the CATENets models \cite{curth:2021:inductivea}, using comparable architectures and optimization procedures for a fair comparison. As a result, the absolute performance levels reported in this subsection differ slightly from those in the main experiments. The corresponding implementation is available in the accompanying code repository.

We compare \method against several recent approaches for treatment effect ranking and estimation. First, we include the tree-based ranking method of \citet{kamran:2024:learning}, which we denote as \emph{tree ranker}. Second, we consider a ranking method based on the DR-learner obtained by constructing soft pairwise ranking targets directly from the doubly robust pseudo outcomes. While doubly robust scores are orthogonal for pointwise CATE estimation, directly inserting them into a ranking loss does not preserve orthogonality. Nevertheless, we include this setting as an empirical baseline, which we denote as \emph{DR ranker}. Third, we compare against recent neural CATE estimation methods from the CATENets library\footnote{Available at \url{https://github.com/AliciaCurth/CATENets}.}, including FlexTENet \cite{curth:2021:inductivea} and PairNet \cite{nagalapatti:2024:pairnet}. 

Table~\ref{tab:additional_comp} reports the resulting AUTOC values on the synthetic benchmark. Overall, \method achieves the strongest performance across training sizes. At the largest training size, the competitive FlexTENet baseline attains marginally higher performance. These results further support the benefits of directly optimizing an orthogonal ranking objective.

\begin{table}[h]
\small
\centering
\caption{\textbf{Synthetic benchmark (additional baselines).} Test AUTOC (mean $\pm$ std dev over five seeds) across training sizes. \emph{Higher is better.} The oracle column reports the metrics obtained by ranking the test set using the true treatment effects. Best mean is shown in \textbf{bold}, second best \underline{underlined}.}
\label{tab:additional_comp}
\setlength{\tabcolsep}{5pt}
\renewcommand{\arraystretch}{1.15}

\begin{tabular}{llccccccc}
\toprule
Metric & Method & $n=100$ & $n=250$ & $n=500$ & $n=1{,}000$ & $n=2{,}000$ & oracle \\
\midrule

\multirow{5}{*}{AUTOC ($\uparrow$)}
& Tree ranker
& 1.012 $\pm$ 0.163
& 1.200 $\pm$ 0.037
& 1.273 $\pm$ 0.028
& 1.303 $\pm$ 0.010
& 1.338 $\pm$ 0.006
& \multirow{5}{*}{1.40} \\

& DR ranker
& 0.337 $\pm$ 0.544
& 1.199 $\pm$ 0.090
& 1.280 $\pm$ 0.010
& 1.319 $\pm$ 0.017
& 1.352 $\pm$ 0.008
& \\

& PairNet
& 1.028 $\pm$ 0.091
& 1.194 $\pm$ 0.023
& 1.242 $\pm$ 0.026
& 1.265 $\pm$ 0.018
& 1.279 $\pm$ 0.007
& \\

& FlexTENet
& \underline{1.189 $\pm$ 0.060}
& \underline{1.309 $\pm$ 0.015}
& \underline{1.322 $\pm$ 0.019}
& \underline{1.345 $\pm$ 0.007}
& \textbf{1.358 $\pm$ 0.011}
& \\

& Rank-learner (\emph{ours})
& \textbf{1.195 $\pm$ 0.129}
& \textbf{1.310 $\pm$ 0.021}
& \textbf{1.326 $\pm$ 0.011}
& \textbf{1.353 $\pm$ 0.008}
& \underline{1.358 $\pm$ 0.009}
& \\
\bottomrule
\end{tabular}
\end{table}

\newpage
\subsubsection{Additional benchmarks}

We additionally evaluate \method on the ACIC \cite{dorie:2019:automated} and IHDP \cite{hill:2011:bayesian} benchmarks following the reviewer suggestions. Both benchmarks were originally designed for treatment effect estimation and consist of multiple data-generating processes, some of which exhibit little or no treatment effect heterogeneity and are therefore less suitable for ranking treatment effects. For ACIC, we therefore only consider replications with substantial treatment effect heterogeneity, where treatment effect ranking is meaningful. For IHDP, we report results on the first 50 replications. Since these datasets further contain replications with varying outcome scales, raw AUTOC values are not directly comparable across replications. We therefore report relative AUTOC obtained by min-max scaling within each replication, together with win rates. Table~\ref{tab:acic_ihdp_relative} reports the resulting performance. Overall, \method achieves the strongest relative AUTOC across both benchmarks and competitive win rates.

\begin{table}[h]
\small
\centering
\caption{\textbf{Relative performance on ACIC and IHDP.} We report relative AUTOC, overall win rate, and \method win rate. Relative AUTOC is obtained by min-max scaling AUTOC within each replication. Overall win rate denotes the fraction of replications in which a method achieves the highest AUTOC, while \method win rate denotes the fraction of replications in which \method outperforms the corresponding method.}
\label{tab:acic_ihdp_relative}
\begin{tabular}{lcccccc}
\toprule
& \multicolumn{3}{c}{ACIC} & \multicolumn{3}{c}{IHDP} \\
\cmidrule(lr){2-4} \cmidrule(lr){5-7}
Method 
& \makecell{Relative\\AUTOC} 
& \makecell{Overall\\win rate} 
& \makecell{\method\\win rate}
& \makecell{Relative\\AUTOC} 
& \makecell{Overall\\win rate} 
& \makecell{\method\\win rate} \\
\midrule
T-learner & 48.1\% & 0.00\% & 75.0\% & 23.1\% & 0.0\% & 80.0\% \\
DR-learner & 47.7\% & 12.5\% & 87.5\% & 62.4\% & \textbf{42.0\%} & 52.0\% \\
Plug-in ranker & 53.7\% & 12.5\% & 75.0\% & 50.7\% & 18.0\% & 68.0\% \\
Rank-Learner (\emph{ours}) & \textbf{75.6\%} & \textbf{75.0\%} & ---- & \textbf{66.7\%} & 40.0\% & ---- \\
\bottomrule
\end{tabular}
\end{table}

\end{document}